\documentclass{article}

\usepackage{microtype}
\usepackage{graphicx}
\usepackage{subcaption}
\usepackage{booktabs} 
\usepackage{enumitem}
\usepackage{hyperref}

\usepackage[preprint]{icml2026}

\usepackage{amsmath}
\usepackage{amssymb}
\usepackage{mathtools}
\usepackage{amsthm}
\usepackage{booktabs}
\usepackage{tcolorbox}
\tcbuselibrary{listings, breakable}

\usepackage[capitalize,noabbrev]{cleveref}

\theoremstyle{plain}
\newtheorem{theorem}{Theorem}[section]
\newtheorem{proposition}[theorem]{Proposition}

\theoremstyle{definition}
\newtheorem{definition}[theorem]{Definition}

\theoremstyle{remark}

\usepackage[textsize=tiny]{todonotes}

\icmltitlerunning{Clipping-Free Policy Optimization for Large Language Models}

\begin{document}

\twocolumn[
  \icmltitle{Clipping-Free Policy Optimization for Large Language Models}



  \icmlsetsymbol{equal}{*}

\begin{icmlauthorlist}
  \icmlauthor{Ömer Veysel Çağatan}{kuisai}
  \icmlauthor{Barış Akgün}{kuisai,koc}
  \icmlauthor{Gözde Gül Şahin}{kuisai,koc}
  \icmlauthor{Xuandong Zhao}{berkeley}
\end{icmlauthorlist}

\icmlaffiliation{kuisai}{KUIS AI Center, Koç University, Istanbul, Türkiye}
\icmlaffiliation{koc}{Koç University, Istanbul, Türkiye}
\icmlaffiliation{berkeley}{University of California, Berkeley, CA, USA}

\icmlcorrespondingauthor{Ömer Veysel Çağatan}{ocagatan19@ku.edu.tr}
\icmlcorrespondingauthor{Xuandong Zhao}{xuandongzhao@berkeley.edu}

  \icmlkeywords{Machine Learning, ICML}

  \vskip 0.3in
]



\printAffiliationsAndNotice{}  

\begin{abstract}
Reinforcement learning has become central to post-training large language models, yet dominant algorithms rely on clipping mechanisms that introduce optimization issues at scale, including zero-gradient regions, reward hacking, and training instability. We propose Clipping-Free Policy Optimization (CFPO), which replaces heuristic clipping with a convex quadratic penalty derived from Total Variation divergence constraints, yielding an everywhere-differentiable objective that enforces stable policy updates without hard boundaries. We evaluate CFPO across both reasoning and alignment settings. In reasoning, CFPO matches clipping-based methods on downstream benchmarks while extending the stable training regime. In alignment, CFPO mitigates verbosity exploitation and reduces capability degradation, while achieving competitive instruction-following performance. CFPO requires only a one-line code change and no additional hyperparameters. Our results suggest that CFPO is a promising drop-in alternative to clipping-based methods for LLM post-training.

\end{abstract}

\section{Introduction}

Reinforcement learning (RL) has become a central component of large language model (LLM) post-training. Early work demonstrated that RL from human feedback (RLHF) could align models with human preferences and instructions~\citep{christiano2023deepreinforcementlearninghuman,stiennon2022learningsummarizehumanfeedback,bai2022constitutionalaiharmlessnessai,ouyang2022traininglanguagemodelsfollow}, and more recent efforts have shown that RL with verifiable rewards (RLVR) can elicit complex reasoning behaviors~\citep{deepseekai2025deepseekr1incentivizingreasoningcapability,openai2024openaio1card,deepmind2025gemini25pro}. These successes have established RL as an essential stage in modern LLM training pipelines.

The dominant algorithms for LLM post-training, including PPO~\citep{schulman2017proximalpolicyoptimizationalgorithms}, GRPO~\citep{shao2024deepseekmath}, and their variants, rely on clipped surrogate objectives to stabilize policy updates. Clipping serves as a computationally efficient approximation to trust region constraints: probability ratios between the current and previous policy are clipped to a narrow range, removing incentives for updates that would change the policy too drastically. This mechanism has enabled scaling RL to large language models, but it remains a heuristic, and its limitations become increasingly apparent at scale.

The core issue is a discontinuity induced by hard clipping in the optimization landscape. Within the clipping range, the objective reduces to unconstrained advantage maximization. Beyond the clipping boundary, gradients vanish entirely. This combination creates pathological dynamics: models learn to exploit superficial correlates of reward such as verbosity~\citep{gao2022scalinglawsrewardmodel}, rapid policy drift degrades capabilities acquired during pretraining~\citep{ouyang2022traininglanguagemodelsfollow,casper2023openproblemsfundamentallimitations}, and zero-gradient regions contribute to entropy collapse and training instability~\citep{liu2025understandingr1zeroliketrainingcritical,huang2025lowprobabilitytokenssustainexploration,yu2025dapoopensourcellmreinforcement,qwen3technicalreport,minimax2025minimaxm1scalingtesttimecompute}. These failures manifest across model scales and training configurations, suggesting they are intrinsic to clipping rather than incidental to specific implementations.

Recent work has proposed targeted modifications: asymmetric clipping bounds~\citep{yu2025dapoopensourcellmreinforcement}, modified advantage normalization~\citep{liu2025understandingr1zeroliketrainingcritical}, dynamic clipping thresholds~\citep{yang2025dcpodynamicclippingpolicy}, and auxiliary regularization~\citep{wang2025lambdagrpounifyinggrpoframeworks}. While these methods mitigate specific failure modes, they treat clipping as a mechanism to be patched rather than replaced, introducing additional hyperparameters while leaving the fundamental discontinuity intact.

In this paper, we propose Clipping-Free Policy Optimization (CFPO), a principled alternative that eliminates clipping entirely. CFPO builds on Simple Policy Optimization (SPO)~\citep{xie2025simplepolicyoptimization}, which replaces clipping with a convex quadratic penalty derived from Total Variation divergence constraints~\citep{queeney2021generalizedproximalpolicyoptimization}. Unlike clipping, this objective is everywhere differentiable: the quadratic term applies a restoring force proportional to deviation from the old policy, rather than zeroing gradients beyond a threshold. As we show empirically, this produces more stable optimization dynamics.

We evaluate CFPO across diverse post-training settings, substituting it for clipped objectives while holding all other training details constant. Our experiments span Qwen2.5~\citep{qwen2.5} models from 1.5B to 7B parameters for reasoning tasks and Llama3-8B~\citep{Llama3modelcard} for alignment, using GRPO and RLOO~\citep{ahmadian2024basicsrevisitingreinforcestyle} as baselines respectively. Across settings, CFPO exhibits more conservative optimization dynamics: slower initial reward growth but sustained progress, lower clipping ratios, and gradual entropy consumption rather than rapid collapse. These dynamics yield concrete improvements:

\begin{itemize}[leftmargin=*, itemsep=0pt, topsep=0pt] 
   \item \textbf{Stable reasoning training.} GRPO becomes unstable around 8 training iterations and completely collapses by 16 across most configurations. CFPO extends the stable training regime, with controlled entropy decay, policy KL, and clipping ratio throughout, while matching GRPO in reasoning accuracy on MATH500~\citep{hendrycks2021measuring}, AIME24~\citep{aime2024}, GSM8K~\citep{cobbe2021gsm8k}, and GPQA-Diamond~\citep{rein2023gpqagraduatelevelgoogleproofqa} datasets.
   \item \textbf{Robust alignment.} CFPO mitigates verbosity exploitation, improving length-controlled AlpacaEval~\citep{dubois2024length} by 4 points while achieving competitive performance on Arena-Hard~\citep{li2024crowdsourceddatahighqualitybenchmarks} and MT-Bench~\citep{zheng2023judging}. CFPO also reduces alignment tax from 12--16\% to 4--5\% on OpenLLM leaderboard~\citep{open-llm-leaderboard} tasks.
\end{itemize}

Overall, our results suggest that CFPO is a promising drop-in alternative to clipping-based methods, offering more stable training without sacrificing downstream performance while requiring only a one-line code change and no additional hyperparameters.

\section{Background}

\subsection{Proximal Policy Optimization}

Proximal Policy Optimization (PPO) \citep{schulman2017proximalpolicyoptimizationalgorithms} is a policy gradient method that stabilizes training by constraining how much the policy can change in a single update. Given a policy $\pi_\theta$, PPO maximizes a clipped surrogate objective:
\begin{equation}
\begin{split}
    \mathcal{J}_{\text{PPO}}(\theta) = \mathbb{E}_{(s_t, a_t) \sim \pi_{\theta_{\text{old}}}} \Big[ \min \big( r_t(\theta) \hat{A}_t, \\
    \text{clip}(r_t(\theta), 1-\epsilon, 1+\epsilon) \hat{A}_t \big) \Big],
\end{split}
\end{equation}
where $r_t(\theta) = \pi_\theta(a_t|s_t) / \pi_{\theta_{\text{old}}}(a_t|s_t)$ is the probability ratio between the current and old policy, $\hat{A}_t$ is the estimated advantage, and $\epsilon$ is a hyperparameter controlling the clipping range. The clipping mechanism removes incentives for pushing the ratio outside $[1-\epsilon, 1+\epsilon]$, approximating a trust region constraint without the computational overhead of second-order methods like TRPO \citep{schulman2017trustregionpolicyoptimization}. In practice, PPO also employs a learned value function to estimate advantages via Generalized Advantage Estimation \citep{schulman2018highdimensionalcontinuouscontrolusing}.

\begin{figure*}[t]
    \centering
    \includegraphics[width=0.95\textwidth]{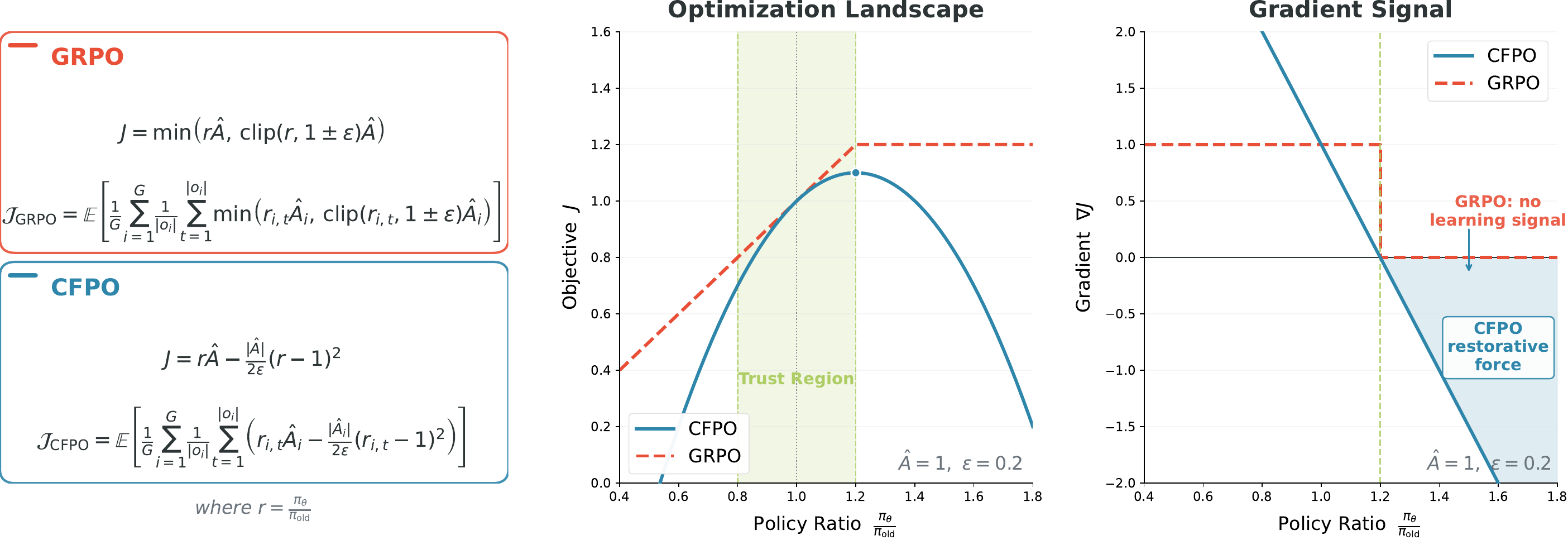}
    \caption{Optimization objective and gradient of GRPO and CFPO as functions of the policy ratio $r = \pi / \pi_{\text{old}}$, shown for advantage $A = 1$ and trust-region width $\epsilon = 0.2$. 
GRPO becomes flat once $r$ exits the trust region, resulting in zero gradient beyond the clipping boundary. 
CFPO instead applies a convex quadratic penalty in $r$, yielding a continuous restoring gradient that pulls $r$ back toward the trust region. 
This difference highlights why CFPO maintains stable learning signals while GRPO can stall when updates push $r$ outside the trust region.}
    \label{fig:method figure}
\end{figure*}
\subsection{Group Relative Policy Optimization}

Group Relative Policy Optimization (GRPO) adapts PPO to the language model setting \citep{shao2024deepseekmath}. In standard PPO, advantages are computed using a learned value function, requiring a critic network of comparable size to the policy---a substantial memory burden for large language models. GRPO eliminates this requirement by estimating advantages through comparisons within groups of sampled responses to the same prompt.

For each prompt $q$, GRPO samples a group of responses $\{o_1, o_2, \ldots, o_G\}$ from the current policy and computes advantages based on relative rewards within the group:
\begin{equation}
    \hat{A}_{i} = \frac{r_i - \text{mean}(\{r_j\}_{j=1}^G)}{\text{std}(\{r_j\}_{j=1}^G)},
\end{equation}
where $r_i$ is the reward for $o_i$. The GRPO objective is:
\begin{align}
    \mathcal{J}_{\text{GRPO}}(\theta) = \mathbb{E}_{q, \{o_i\}_{i=1}^G} \Bigg[ \frac{1}{G} & \sum_{i=1}^G \frac{1}{|o_i|} \sum_{t=1}^{|o_i|} \Big( L_{\text{clip}}(r_{i,t}, \hat{A}_i) \notag
    \\
    & - \beta \, \mathbb{D}_{\text{KL}}[\pi_\theta \| \pi_{\text{ref}}] \Big) \Bigg],
\end{align}
where $L_{\text{clip}}$ denotes PPO's clipped objective, $r_{i,t}(\theta) = \pi_\theta(o_{i,t}|q, o_{i,<t}) / \pi_{\theta_{\text{old}}}(o_{i,t}|q, o_{i,<t})$ is the per-token probability ratio, and $\beta$ controls KL regularization against a reference policy $\pi_{\text{ref}}$.
\subsection{Simple Policy Optimization}

Simple Policy Optimization (SPO)~\citep{xie2025simplepolicyoptimization} provides a principled alternative to PPO’s clipping mechanism. Although PPO’s clipping is often viewed as heuristic, it can be interpreted as approximately enforcing a trust region defined by Total Variation (TV) divergence between successive policies~\citep{queeney2021generalizedproximalpolicyoptimization}. Under this view, PPO implicitly optimizes the policy objective subject to a constraint on the expected deviation of probability ratios from unity.

A key motivation for SPO is that TV divergence constraints induce a strictly larger feasible policy space than the KL divergence constraints used in TRPO. Moreover, optimizing within the TV-constrained space yields a tighter policy improvement lower bound than the corresponding KL-constrained formulation. We state these results below and defer formal proofs to Appendix~\ref{app:theory}.

\begin{proposition}[TV Solution Space Contains KL Solution Space]
\label{prop:tv_contains_kl_main}
Let $\Omega_{\mathrm{TV}}$ and $\Omega_{\mathrm{KL}}$ denote the policy sets satisfying per-state TV and KL divergence constraints, respectively. If $\delta_{\mathrm{TV}} \ge \sqrt{\delta_{\mathrm{KL}}/2}$, then
$\Omega_{\mathrm{KL}} \subset \Omega_{\mathrm{TV}}$.
\end{proposition}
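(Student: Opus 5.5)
The plan is to apply Pinsker's inequality state by state. First I fix the definitions, following the convention of \citet{queeney2021generalizedproximalpolicyoptimization} and \citet{xie2025simplepolicyoptimization}:
\begin{align*}
\Omega_{\mathrm{TV}} &= \{\pi : D_{\mathrm{TV}}(\pi_{\mathrm{old}}(\cdot|s),\pi(\cdot|s)) \le \delta_{\mathrm{TV}} \ \forall s\},\\
\Omega_{\mathrm{KL}} &= \{\pi : D_{\mathrm{KL}}(\pi_{\mathrm{old}}(\cdot|s)\,\|\,\pi(\cdot|s)) \le \delta_{\mathrm{KL}} \ \forall s\}.
\end{align*}
Here $D_{\mathrm{TV}}(p,q)=\tfrac12\sum_a|p(a)-q(a)|$. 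The argument is pointwise in $s$, so the order of arguments in the KL term does not matter: Pinsker's inequality holds in either direction.

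The steps are as follows.
\begin{enumerate}
\item Take any $\pi\in\Omega_{\mathrm{KL}}$ and fix an arbitrary state $s$.
\item Apply Pinsker's inequality:
\[
D_{\mathrm{TV}}(\pi_{\mathrm{old}}(\cdot|s),\pi(\cdot|s)) \le \sqrt{\tfrac12 D_{\mathrm{KL}}(\pi_{\mathrm{old}}(\cdot|s)\|\pi(\cdot|s))}.
\]
\item Use the KL constraint and the monotonicity of the square root to bound the right-hand side by $\sqrt{\delta_{\mathrm{KL}}/2}$.
\item Invoke the hypothesis to bound this in turn by $\delta_{\mathrm{TV}}$.
\end{enumerate}
Since $s$ was arbitrary, $\pi\in\Omega_{\mathrm{TV}}$, which gives the inclusion $\Omega_{\mathrm{KL}}\subseteq\Omega_{\mathrm{TV}}$.

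The only delicate point is normalization. If the paper defines TV without the factor $\tfrac12$, then Pinsker reads $\|p-q\|_1\le\sqrt{2D_{\mathrm{KL}}}$, and the threshold would change. I will therefore state the $\tfrac12$ convention explicitly, since that is the one under which the hypothesis $\delta_{\mathrm{TV}}\ge\sqrt{\delta_{\mathrm{KL}}/2}$ is exactly what is needed.

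If strict inclusion is intended, I will add a witness policy in $\Omega_{\mathrm{TV}}\setminus\Omega_{\mathrm{KL}}$. The natural choice is a $\pi$ that puts zero mass on some action that $\pi_{\mathrm{old}}$ supports with small probability $\eta\le\delta_{\mathrm{TV}}$, and moves that mass to another action. The TV distance is then $\eta$, so the TV constraint holds, while the KL divergence is $+\infty$, so the KL constraint fails. This requires $\pi_{\mathrm{old}}$ to have at least two actions in its support at some state. I expect this mild nondegeneracy assumption to be the only real obstacle, and I would note it explicitly.
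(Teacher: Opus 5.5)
Your argument is the paper's proof: for $\tilde\pi\in\Omega_{\mathrm{KL}}$ and each state, Pinsker gives $D_{\mathrm{TV}}\le\sqrt{\tfrac12 D_{\mathrm{KL}}}\le\sqrt{\tfrac12\delta_{\mathrm{KL}}}\le\delta_{\mathrm{TV}}$, and the paper uses the same $\tfrac12$-normalized TV (consistent with its factor $2$ on $D_{\mathrm{TV}}$ versus $\mathbb{E}|r-1|$ in the performance bound). The paper proves only the non-strict inclusion, so your strict-inclusion aside is unnecessary; if you keep it, note that your witness needs an action with $\pi_{\mathrm{old}}(a|s)\le\delta_{\mathrm{TV}}$, which having two supported actions does not guarantee (e.g.\ $\pi_{\mathrm{old}}(\cdot|s)=(\tfrac12,\tfrac12)$ with $\delta_{\mathrm{TV}}=0.1$).
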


\begin{theorem}[TV-Constrained Policy Improvement]
\label{thm:tv_superior_main}
Let $\mathcal{L}_\pi^{\mathrm{TV}}$ and $\mathcal{L}_\pi^{\mathrm{KL}}$ denote the standard policy improvement lower bounds with TV and KL penalties~\citep{xie2025simplepolicyoptimization}. Let $\tilde{\pi}^*_{\mathrm{TV}}$ and $\tilde{\pi}^*_{\mathrm{KL}}$ be their respective maximizers over $\Omega_{\mathrm{TV}}$ and $\Omega_{\mathrm{KL}}$. For $
\delta_{\mathrm{TV}} \ge \sqrt{\delta_{\mathrm{KL}}/2},
$
the following holds:
\begin{equation}
\mathcal{L}_\pi^{\mathrm{TV}}(\tilde{\pi}^*_{\mathrm{TV}})
\;\ge\;
\mathcal{L}_\pi^{\mathrm{KL}}(\tilde{\pi}^*_{\mathrm{KL}}).
\end{equation}
\end{theorem}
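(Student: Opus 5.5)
We compare the two maximization problems in two steps: first enlarge the feasible set, then compare the penalty terms pointwise. Recall the two lower bounds from \citet{xie2025simplepolicyoptimization}:
\begin{equation*}
\mathcal{L}_\pi^{\mathrm{TV}}(\tilde\pi) = \eta(\pi) + \mathbb{E}_{s\sim\rho_\pi,a\sim\tilde\pi}[A_\pi(s,a)] - C\,\max_s D_{\mathrm{TV}}(\pi\|\tilde\pi)[s]^2 ,
\end{equation*}
\begin{equation*}
\mathcal{L}_\pi^{\mathrm{KL}}(\tilde\pi) = \eta(\pi) + \mathbb{E}_{s\sim\rho_\pi,a\sim\tilde\pi}[A_\pi(s,a)] - C\,\max_s \tfrac{1}{2}D_{\mathrm{KL}}(\pi\|\tilde\pi)[s] .
\end{equation*}
Here $C = 4\gamma\max|A|/(1-\gamma)^2$ is the same constant in both. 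The KL version is obtained from the TV version by Pinsker's inequality, and we take that as the form of the ``standard'' bounds. The two bounds share the surrogate advantage term and differ only in the penalty.

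\textbf{Step 1 (pointwise domination).} For every policy $\tilde\pi$ and every state $s$, Pinsker's inequality gives $D_{\mathrm{TV}}(\pi\|\tilde\pi)[s]^2 \le \tfrac12 D_{\mathrm{KL}}(\pi\|\tilde\pi)[s]$. Taking the maximum over $s$ preserves this inequality. Since the penalty enters with a negative sign and $C\ge 0$, it follows that
\begin{equation*}
\mathcal{L}_\pi^{\mathrm{TV}}(\tilde\pi)\ge\mathcal{L}_\pi^{\mathrm{KL}}(\tilde\pi) \quad \text{for all } \tilde\pi .
\end{equation*}

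\textbf{Step 2 (feasible-set inclusion).} By Proposition~\ref{prop:tv_contains_kl_main}, the hypothesis $\delta_{\mathrm{TV}}\ge\sqrt{\delta_{\mathrm{KL}}/2}$ gives $\Omega_{\mathrm{KL}}\subset\Omega_{\mathrm{TV}}$. In particular $\tilde\pi^*_{\mathrm{KL}}\in\Omega_{\mathrm{TV}}$.

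\textbf{Step 3 (chain).} Since $\tilde\pi^*_{\mathrm{TV}}$ maximizes $\mathcal{L}^{\mathrm{TV}}_\pi$ over $\Omega_{\mathrm{TV}}$, and $\tilde\pi^*_{\mathrm{KL}}$ lies in $\Omega_{\mathrm{TV}}$, we get
\begin{equation*}
\mathcal{L}_\pi^{\mathrm{TV}}(\tilde\pi^*_{\mathrm{TV}})\ge\mathcal{L}_\pi^{\mathrm{TV}}(\tilde\pi^*_{\mathrm{KL}})\ge\mathcal{L}_\pi^{\mathrm{KL}}(\tilde\pi^*_{\mathrm{KL}}) .
\end{equation*}
The last inequality is Step 1 applied at $\tilde\pi=\tilde\pi^*_{\mathrm{KL}}$. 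This is the claimed inequality.

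\textbf{Expected obstacle.} The main point needing care is that both bounds must use the same constant $C$ and compatible penalty forms: squared TV on one side, KL (up to the factor $\tfrac12$) on the other. Pinsker's inequality then applies in the right direction. I would first fix the exact statements from \citet{xie2025simplepolicyoptimization} and check that $C$ matches.

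If instead the KL bound is stated with constant $4\gamma\epsilon/(1-\gamma)^2$ multiplying $\max_s D_{\mathrm{KL}}$ without the $\tfrac12$, then $\mathcal{L}^{\mathrm{KL}}$ is even smaller. Step 1 still holds, because $\tfrac12 D_{\mathrm{KL}}\le D_{\mathrm{KL}}$ and the KL penalty is nonnegative.

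A secondary technicality is the existence of maximizers. The statement assumes they exist; if it did not, the same argument would go through with suprema in place of maxima.
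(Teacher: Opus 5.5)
Your argument is correct and matches the paper's proof: the chain $\mathcal{L}_\pi^{\mathrm{TV}}(\tilde{\pi}^*_{\mathrm{TV}}) \ge \mathcal{L}_\pi^{\mathrm{TV}}(\tilde{\pi}^*_{\mathrm{KL}}) \ge \mathcal{L}_\pi^{\mathrm{KL}}(\tilde{\pi}^*_{\mathrm{KL}})$, using the inclusion of Proposition~\ref{prop:tv_contains_kl_main} and then Pinsker's inequality. The one difference is the form of the bounds: in Appendix~\ref{app:theory} the paper uses expected (not max-squared) penalties, $\frac{2\xi\gamma}{(1-\gamma)^2}\mathbb{E}_{s\sim\rho_\pi}[D_{\mathrm{TV}}(\pi\|\tilde\pi)[s]]$ versus $\frac{2\xi\gamma}{(1-\gamma)^2}\mathbb{E}_{s\sim\rho_\pi}[\sqrt{\tfrac12 D_{\mathrm{KL}}(\pi\|\tilde\pi)[s]}]$ with the same $\xi$ in both, and Pinsker applied pointwise in $s$ gives your Step 1 for these forms too.
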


Unlike PPO, which enforces the trust region via clipping and yields zero gradients for samples outside the clipping range, SPO incorporates the constraint directly using a quadratic penalty. The resulting objective is
\begin{equation}
\mathcal{J}_{\mathrm{SPO}}(\theta)
=
\mathbb{E}_{(s_t,a_t)\sim\pi_{\theta_{\mathrm{old}}}}
\!\left[
r_t \hat{A}_t
-
\frac{|\hat{A}_t|}{2\epsilon}(r_t-1)^2
\right],
\end{equation}
where $r_t = \pi_\theta(a_t \mid s_t) / \pi_{\theta_{\mathrm{old}}}(a_t \mid s_t)$. This objective is convex and differentiable in $r_t$, and its maximizer satisfies $r_t^* = 1 + \mathrm{sign}(\hat{A}_t)\epsilon$,
corresponding to an update at the trust region boundary while retaining nonzero gradients for all samples.
\section{Methodology}

\subsection{Clipping-Free Policy Optimization}

Motivated by the success of SPO~\citep{xie2025simplepolicyoptimization} in simulation environments, we propose Clipping-Free Policy Optimization (CFPO) as an adaptation to the language model setting. CFPO retains the critic-free design common to modern LLM post-training while replacing the clipped surrogate objective with SPO's quadratic penalty. This substitution requires only a one-line code change, making CFPO a drop-in replacement in existing pipelines.

The core modification is straightforward. Where clipping-based methods enforce the trust region via:
\begin{equation}
L_{\text{clip}} = \min\left(r_{i,t} \hat{A}_i, \;\text{clip}(r_{i,t}, 1-\epsilon, 1+\epsilon) \hat{A}_i\right),
\end{equation}
CFPO instead applies a quadratic penalty:
\begin{equation}
L_{\text{CFPO}} = r_{i,t} \hat{A}_i - \frac{|\hat{A}_i|}{2\epsilon}(r_{i,t} - 1)^2,
\end{equation}
where $r_{i,t}(\theta) = \pi_\theta(o_{i,t}|q, o_{i,<t}) / \pi_{\theta_{\text{old}}}(o_{i,t}|q, o_{i,<t})$ is the per-token probability ratio and $\epsilon$ controls the trust region width. As shown in Figure~\ref{fig:method figure}, this yields a convex, everywhere-differentiable objective: rather than zeroing gradients when the ratio exits the clipping range, CFPO provides a continuous restoring force that pulls the policy back toward the trust region. The penalty is minimized at $r_{i,t} = 1$ and grows quadratically with deviation, while scaling by $|\hat{A}_i|$ ensures that larger advantages permit proportionally larger updates.

The full CFPO objective is:
\begin{equation}
\begin{split}
\mathcal{J}_{\text{CFPO}}(\theta) = \mathbb{E}_{q, \{o_i\}_{i=1}^G} \Bigg[ \frac{1}{G} \sum_{i=1}^G \frac{1}{|o_i|} \sum_{t=1}^{|o_i|} \Big( r_{i,t} \hat{A}_i \\
- \frac{|\hat{A}_i|}{2\epsilon} (r_{i,t} - 1)^2 - \beta \, \mathbb{D}_{\text{KL}}[\pi_\theta \| \pi_{\text{ref}}] \Big) \Bigg],
\end{split}
\end{equation}
where $\beta$ is the KL regularization coefficient against a reference policy $\pi_{\text{ref}}$. Since CFPO modifies only the surrogate objective, it is agnostic to how advantages $\hat{A}_i$ are estimated. We exploit this modularity to evaluate CFPO with two distinct advantage estimators standard to different post-training settings.

\subsection{Advantage Estimation}

\paragraph{Group-Relative Advantages.} For reasoning tasks, we follow GRPO~\citep{shao2024deepseekmath}. Given a prompt $q$, we sample a group of $G$ responses $\{o_1, o_2, \ldots, o_G\}$ from the current policy and compute group-normalized advantages:
\begin{equation}
\hat{A}_{i} = \frac{R_i - \text{mean}(\{R_j\}_{j=1}^G)}{\text{std}(\{R_j\}_{j=1}^G)},
\end{equation}
where $R_i$ is the reward for response $o_i$. Group-relative advantages suit reasoning tasks where rewards are verifiable (e.g., correctness of mathematical solutions), and normalization by standard deviation helps stabilize learning when reward magnitudes vary across different problems.

\paragraph{Leave-One-Out Advantages.} For alignment tasks, we adopt the REINFORCE Leave-One-Out (RLOO) estimator~\citep{ahmadian2024basicsrevisitingreinforcestyle}. Given $K$ sampled responses per prompt, RLOO computes the advantage as:
\begin{equation}
\hat{A}_{i} = R_i - \frac{1}{K-1} \sum_{j \neq i} R_j,
\end{equation}
where each sample uses the remaining $K-1$ samples as an unbiased baseline estimate, functioning as a parameter-free value function. Unlike group-relative advantages, RLOO does not normalize by standard deviation, resulting in different gradient scaling behavior.

\vspace{0.5em}
The use of two distinct advantage estimators allows us to assess whether CFPO's improvements stem from the quadratic penalty itself or from interactions with specific estimation choices. As we show in our experiments, CFPO exhibits consistent stability benefits across both settings, suggesting that replacing clipping with a convex penalty is broadly beneficial regardless of how advantages are computed.

\begin{figure*}[t]
    \centering
    \includegraphics[width=\textwidth]{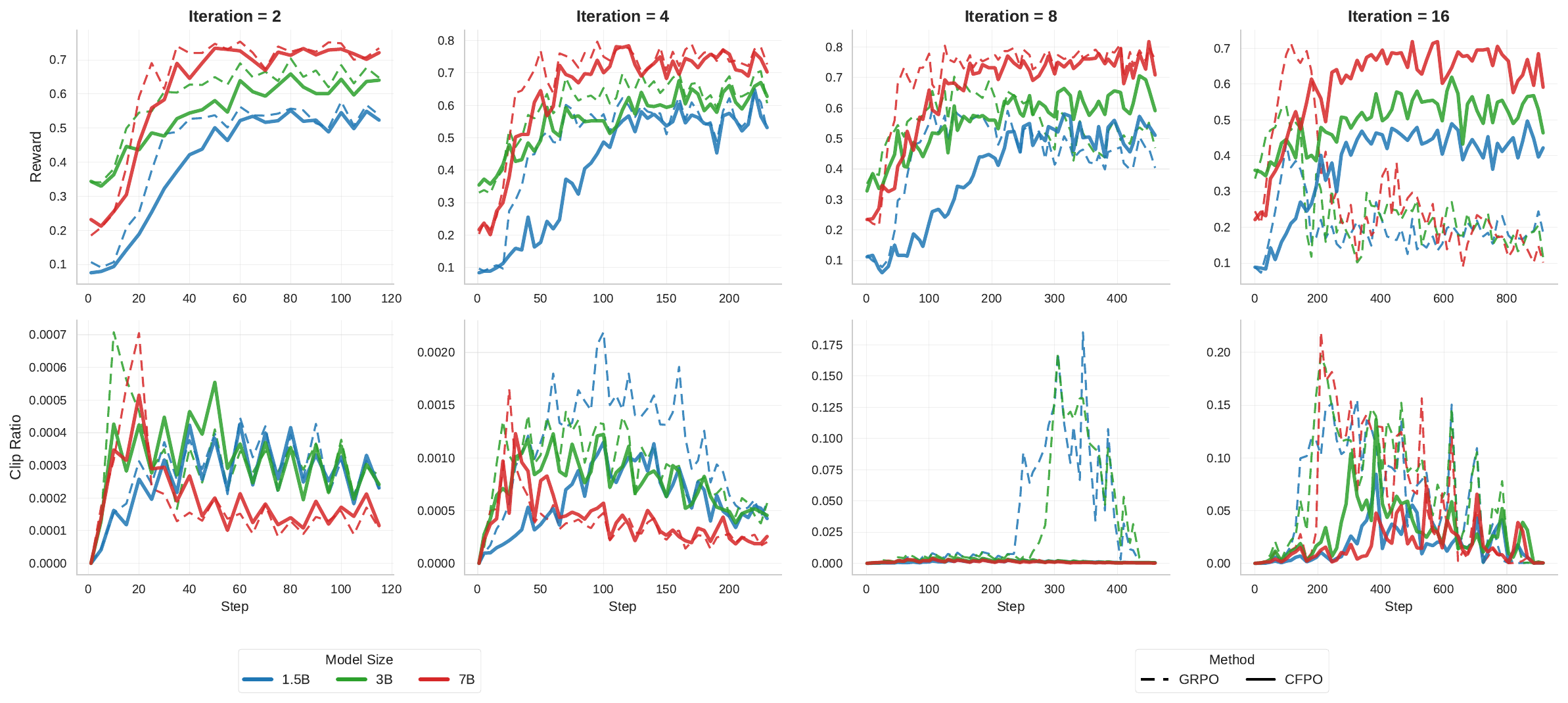}
    \caption{Training dynamics of CFPO vs. GRPO under increasing off-policy pressure.
Reward (top) and clip ratio (bottom) trajectories for Qwen2.5 models trained with different numbers of iterations per update (columns). GRPO (dashed) exhibits faster early reward gains but increasingly large and unstable updates as iterations grow, reflected in rising clip ratios and eventual training collapse at higher iteration counts ($\geq$ 8 for most models). In contrast, CFPO (solid) progresses more conservatively, maintaining consistently low clip ratios and stable training across extended horizons, while ultimately reaching comparable reward levels. These dynamics illustrate the trade-off between optimization aggressiveness and stability in off-policy post-training, and highlight CFPO’s robustness to repeated sample reuse.}
    \label{fig:model_metrics_comparison-trl}
\end{figure*}

\section{Experimental Setup}
\subsection{Reasoning}

\paragraph{Training Setup.}
We follow the training setup of~\citet{zhao2025learningreasonexternalrewards} and train GRPO and CFPO using the Open-R1/TRL framework~\citep{openr1,vonwerra2022trl} on the training split of the MATH dataset~\citep{hendrycks2021measuring}, which contains 7{,}500 problems. We use Qwen2.5-1.5B, Qwen2.5-3B, and Qwen2.5-7B~\citep{qwen2.5} as backbone models, experimenting with both base and Instruct variants. All models are trained using a chat-style prompting format.
Since the base models exhibit limited instruction-following ability prior to training, we do not require explicit separation between intermediate reasoning and final answers. Each training update processes 128 problems, and we generate 3 candidate solutions per problem. Unless otherwise specified, we use a KL penalty coefficient of $\beta = 0.0$. For some ablations, we also use the \texttt{verl} framework~\citep{sheng2024hybridflow} with Qwen2.5-3B.

\paragraph{Off-Policy Mechanisms in \texttt{verl} and TRL.}
Comparing GRPO and CFPO requires off-policy settings where the effectiveness of trust region methods can be meaningfully evaluated; in on-policy RL, both methods reduce to simple advantage maximization. In the traditional deep RL literature~\citep{schulman2017proximalpolicyoptimizationalgorithms, engstrom2020implementationmattersdeeppolicy,huang2021cleanrlhighqualitysinglefileimplementations,xie2025simplepolicyoptimization}, off-policy behavior is typically introduced through two mechanisms: sample reuse and mini-batch gradient updates. 

However, modern RL frameworks for LLMs differ in their implementations of policy gradients. \texttt{verl} follows the standard deep RL structure for off-policy training, while TRL, which Open-R1 is based on, deviates from this approach. TRL follows the GRPO recipe and only supports sample reuse (also referred to as iterations) without mini-batch updates. This discrepancy has practical consequences: for instance, clipping may be observed in \texttt{verl} but not in TRL for certain recipes that have different mini-batch size than batch size. Note that when iteration is set to 1 and mini-batch size equals batch size, both frameworks operate in the on-policy regime, where GRPO and CFPO are equivalent.

To fairly compare objectives across frameworks, we conduct experiments with the following configurations. For TRL-based models, we vary iterations as powers of 2 up to 16, across 3 model sizes, 2 policy loss types (GRPO and CFPO), and 2 model bases (base and instruct), yielding 48 models. For \texttt{verl}, we vary iterations up to 8, with 4 different batch ratios (batch size divided by mini-batch size, corresponding to the number of local updates), and 2 policy loss types, yielding 32 models. In total, we train 80 models across both frameworks.

\paragraph{Evaluation.}
We adopt the same chat-style prompting format used during training for evaluation. We use sampling-based decoding with temperature $0.6$ and top-$p$ $0.95$ for all evaluations. We evaluate on MATH500~\citep{hendrycks2021measuring} and GSM8K~\citep{cobbe2021gsm8k}, and AIME24~\citep{aime2024} for math reasoning, and GPQA-Diamond~\citep{rein2023gpqagraduatelevelgoogleproofqa} for scientific reasoning, all using the \texttt{lighteval} library~\citep{lighteval}.

\subsection{RLHF}

\paragraph{Training Setup.}
To compare CFPO against RLOO (critic-free PPO with reinforce leave-one-out advantage estimation) in standard RLHF pipeline, we train models using the RLOO advantage estimation~\citep{ahmadian2024basicsrevisitingreinforcestyle}. We use the supervised fine-tuning (SFT) and reward models provided by the OpenRLHF repository~\citep{hu2024openrlhf}, both based on Llama-3 backbones~\citep{Llama3modelcard}. RLHF training additionally uses the prompt collections provided by OpenRLHF. All RLHF experiments use default OpenRLHF hyperparameters, with $k=2$ rollouts per prompt. For KL-free RLHF settings, we explicitly set the KL coefficient to zero; otherwise, all parameters follow the default configuration.

\paragraph{Evaluation.}
We evaluate RLHF-trained models on a range of instruction-following benchmarks, including AlpacaEval 2.0~\citep{dubois2024length}, Arena-Hard v0.1~\citep{li2024crowdsourceddatahighqualitybenchmarks}, MT-Bench~\citep{zheng2023judging}, and IFEval~\citep{zhou2023instructionfollowingevaluationlargelanguage}. AlpacaEval 2.0 and Arena-Hard v0.1 are judged using GPT-4.1~\citep{openai2025gpt4.1}, while MT-Bench evaluations use GPT-4~\citep{openai2024gpt4technicalreport}. To assess whether RLHF preserves general model capabilities, we additionally evaluate on a subset of tasks from the OpenLLM Leaderboard~\citep{open-llm-leaderboard}, specifically ARC Challenge~\citep{clark2018think}, GSM8K~\citep{cobbe2021gsm8k}, HellaSwag~\citep{zellers-etal-2019-hellaswag}, MMLU~\citep{hendrycks2020measuring}, TruthfulQA~\citep{lin2022truthfulqa}, and Winogrande~\citep{levesque2012winograd}.

\begin{table*}[t]
\centering
\caption{Instruction-following and preference alignment results of CFPO and RLOO on Llama3-8B. 
We report performance on preference-based benchmarks (Arena-Hard, AlpacaEval, MT-Bench) and instruction-following (IFEval), including unregularized variants with KL coefficient set to zero. 
Across benchmarks, CFPO maintains strong alignment behavior while exhibiting reduced sensitivity to verbosity compared to RLOO, as reflected by differences between raw and length-controlled evaluations.}
\label{tab:alignment}
\begin{tabular}{lcccccc}
\toprule
\textbf{Method} 
& \textbf{Arena-Hard} 
& \textbf{AlpacaEval-LC} 
& \textbf{AlpacaEval-WR}
& \textbf{MT-Bench} 
& \textbf{IFEval} \\
\midrule
Llama3-8B-SFT & 6.5 & 5.22 & 3.52 & 7.84 & 59.59 \\
RLOO & 21.8 & 7.25 & 18.17 & 7.90 & 47.00 \\
RLOO (KL=0) & 22.1 & 8.15 & 16.39 & 7.86 & 44.12 \\
CFPO & 20.1 & 11.26 & 17.08 & 7.87 & 55.64 \\
CFPO (KL=0) & 22.5 & 11.55 & 16.08 & 8.04 & 54.20 \\
\bottomrule
\end{tabular}
\end{table*}

\begin{table*}[h]
\centering
\caption{Downstream capability evaluation of CFPO and RLOO after RLHF on Llama3-8B, measured on the OpenLLM Leaderboard tasks. Comparisons against the SFT baseline illustrate how different alignment objectives affect general-purpose capabilities. CFPO preserves substantially more of the base model’s capabilities across tasks, whereas RLOO induces broader degradation.}
\label{tab:capability}
\begin{tabular}{lccccccc}
\toprule
\textbf{Method} & \textbf{ARC} & \textbf{GSM8K} & \textbf{HellaSwag} & \textbf{MMLU} & \textbf{TruthfulQA} & \textbf{Winogrande} & \textbf{Avg} \\
\midrule
Llama3-8B-SFT & 0.525 & 0.762 & 0.584 & 0.627 & 0.401 & 0.719 & 0.603 \\
RLOO & 0.382 & 0.813 & 0.435 & 0.626 & 0.392 & 0.536 & 0.531 \\
RLOO (KL=0) & 0.374 & 0.683 & 0.408 & 0.624 & 0.406 & 0.535 & 0.505 \\
CFPO & 0.432 & 0.797 & 0.542 & 0.631 & 0.441 & 0.617 & 0.577 \\
CFPO (KL=0) & 0.431 & 0.789 & 0.539 & 0.631 & 0.428 & 0.614 & 0.572 \\
\bottomrule
\end{tabular}
\end{table*}

\section{Results and Analysis}

Our analysis primarily examines how replacing clipping with a convex quadratic penalty affects optimization behavior and stability, and how these differences translate into final downstream performance in LLM post-training. We study this across reasoning-focused reinforcement learning (RLVR) and alignment-focused reinforcement learning (RLHF), and under multiple sources of off-policy pressure.

\subsection{Reasoning}

\paragraph{Cold-Start Training with Qwen Models.}

Following recent work demonstrating that RL can be applied directly to base language models without prior supervised fine-tuning~\citep{deepseekai2025deepseekr1incentivizingreasoningcapability,marjanović2026deepseekr1thoughtologyletsthink}, we perform cold-start training using both GRPO and CFPO across multiple Qwen2.5 models, scaling the number of training iterations up to 16.

In general, we expect GRPO to optimize reward more aggressively than CFPO: GRPO performs direct advantage maximization subject to a clipped constraint, whereas CFPO's quadratic penalty continues to regularize updates even when probability ratios lie within the trust region. We also expect CFPO to exhibit lower clipping ratios, since it actively discourages large updates rather than discarding clipped samples. 

Figure~\ref{fig:model_metrics_comparison-trl} confirms these expectations. GRPO improves reward more rapidly in early training, while CFPO progresses more gradually and eventually reaches comparable levels. Consistent with its design, CFPO maintains lower clipping ratios throughout. However, GRPO becomes increasingly unstable as iterations grow: instability appears around 8 iterations, and training collapses by 16 across most configurations.

We evaluate downstream reasoning performance on Math500, GSM8K, AIME24, and GPQA-Diamond (Tables~\ref{tab:qwen1.5b-base}, \ref{tab:qwen3b-base}, \ref{tab:qwen7b-base}). On Math500, GRPO and CFPO perform comparably across model sizes. For GRPO, collapse occurs at 8 iterations for all models except Qwen2.5-7B, where it is delayed until 16. CFPO exhibits breakdown at 16 iterations for the 1.5B and 7B models, while remaining always stable for the 3B model.

Interestingly, CFPO underperforms on GSM8K compared to GRPO for the 1.5B and 7B models. Qualitative inspection suggests this gap reflects weaker instruction-following rather than degraded reasoning: CFPO-trained models more frequently produce incomplete generations or responses in unintended languages. We corroborate this using AlpacaEval, where CFPO-trained models achieve lower scores. On AIME24 and GPQA-Diamond, we find no evidence that GRPO-trained models consistently outperform CFPO, suggesting GRPO's GSM8K advantage is attributable to more aggressive instruction learning rather than better reasoning.

Overall, these results reveal a trade-off between optimization aggressiveness and stability. GRPO achieves faster early reward gains but exhibits growing instability at higher iteration counts. CFPO advances more gradually, maintaining better stability while reaching comparable downstream reasoning performance.

\begin{figure*}[t]
    \centering
    \includegraphics[width=\textwidth]{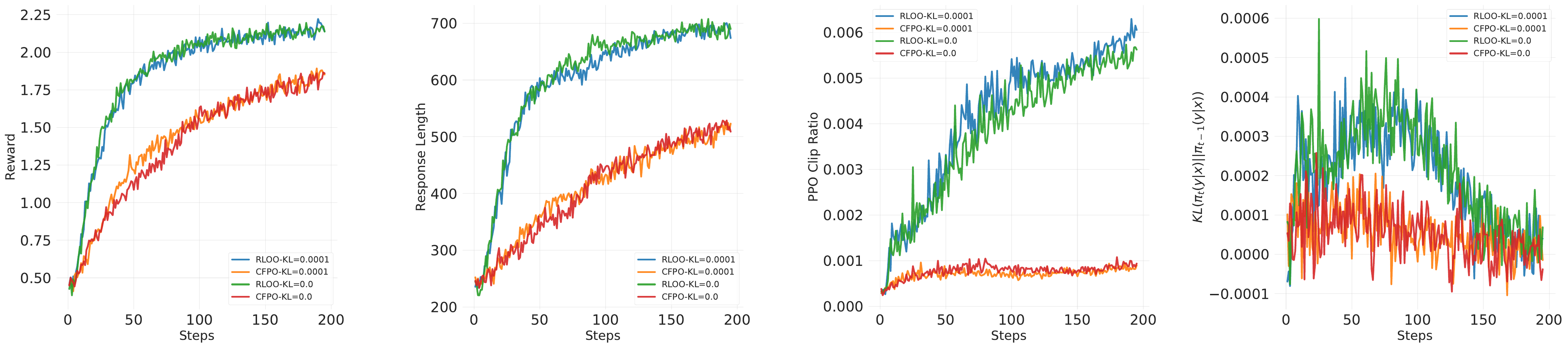}
    \caption{RLHF training dynamics on Llama3-8B under RLOO and CFPO with different KL penalty coefficients. We report trajectories over training steps for (a) reward, (b) generated response length, (c) policy clipping ratio, and (d) KL divergence between consecutive policy updates. RLOO exhibits rapid early reward increases accompanied by growing response lengths and elevated clipping activity, particularly when the KL penalty is weak or removed, indicating more aggressive optimization. In contrast, CFPO yields steadier reward improvement while maintaining stable response lengths, lower clipping ratios, and controlled KL divergence across settings, reflecting more conservative and stable policy updates during RLHF.}
    \label{fig:rlhf-training}
\end{figure*}

\paragraph{RLVR on Instruction-Tuned Models.}

To separate effects arising from instruction-following behavior from those related to reasoning, we perform RLVR on instruction-tuned versions of the same models. Consistent with cold-start results, we observe no substantial performance differences between GRPO and CFPO across model scales (Tables~\ref{tab:qwen1.5b}, \ref{tab:qwen3b}, \ref{tab:qwen7b}). The main difference remains optimization stability: GRPO exhibits instability at higher iteration counts, while CFPO remains stable across model sizes and training durations which is consistent with cold-start experiments.

\paragraph{Off-Policy Training with \texttt{verl}.}

Several RLVR recipes implemented in \texttt{verl} have non-zero clipping ratios even without explicit sample reuse. Inspection of these configurations reveals that off-policy effects arise primarily from mini-batch policy updates, which introduce mild policy lag despite using fresh data. This regime has motivated the use of relatively large clipping thresholds in prior work~\citep{yu2025dapoopensourcellmreinforcement,yang2025dcpodynamicclippingpolicy}.

At a single iteration, observations closely mirror those from TRL. GRPO optimizes reward rapidly, while SPO converges more slowly but eventually catches up, achieving comparable training reward, validation reward, policy clipping fractions, and KL (Figs.~\ref{fig:verl-trainreward}, \ref{fig:verl-valreward}, \ref{fig:verl-pg_clipfrac}, \ref{fig:verl-ppo-kl}). In contrast, entropy dynamics differ markedly: GRPO consumes entropy substantially faster than SPO, reflecting more aggressive optimization behavior (Fig.~\ref{fig:verl-entropy}). While this accelerates short-horizon reward gains, it likely undermines stability in longer training regimes where sustained exploration is required.

We next examine how different sources of off-policy pressure affect stability. Increasing the batch ratio up to 8 does not, by itself, induce collapse, whereas increasing the number of iterations leads to instability around 8 iterations (Figs.~\ref{fig:verl-trainreward}, \ref{fig:verl-pg_clipfrac}). This contrast suggests that batch-ratio–induced off-policy updates with fresh data are less destabilizing than iteration-based sample reuse, which more readily amplifies policy lag.

However, these effects are not independent. Even at iteration counts where GRPO remains stable (e.g., 4 iterations), increasing the batch ratio can still induce collapse, indicating that off-policy pressure accumulates across mechanisms (Figs.~\ref{fig:verl-pg_clipfrac}, \ref{fig:verl-entropy}). In contrast, SPO consistently maintains lower clipping ratios, slower entropy decay, and more stable policy updates across configurations, contributing to improved robustness under off-policy regimes.

\paragraph{Extensibility.}

Moreover, recent RLVR work has introduced orthogonal techniques including token-level loss aggregation, dynamic sampling, and entropy regularization~\citep{yu2025dapoopensourcellmreinforcement,liu2025understandingr1zeroliketrainingcritical}. Since CFPO modifies only the surrogate objective, these techniques transfer directly; practitioners can replace clipped objectives with CFPO's penalty without further modification.

\subsection{RLHF}
We train CFPO in standard RLHF using RLOO advantage estimation on LLaMA-3-8B. All hyperparameters are taken directly from the OpenRLHF repository without tuning; the only modification is replacing the clipped surrogate with CFPO’s quadratic penalty. This isolates the effect of the objective itself. Tables~\ref{tab:alignment} and \ref{tab:capability} report results, with training dynamics shown in Figure~\ref{fig:rlhf-training}.

\paragraph{Optimization Dynamics.}

As in the reasoning setting, we first examine how the choice of objective shapes optimization behavior during training. Figure~\ref{fig:rlhf-training} shows that RLOO exhibits aggressive early optimization—rewards increase rapidly before plateauing—while CFPO demonstrates more gradual, approximately linear reward improvement throughout training. This pattern extends to other metrics: under RLOO, response length increases alongside reward, indicating the model learns to exploit verbosity as a reward-hacking strategy, while CFPO maintains stable response lengths. RLOO's clipping ratios also increase over training while CFPO keeps these values lower and more consistent.

\paragraph{Instruction Following.}

We next evaluate how these differing optimization dynamics translate into instruction-following behavior. Both methods achieve competitive scores on Arena-Hard and MT-Bench. The length exploitation observed during training becomes apparent in AlpacaEval: while raw win rates appear similar, length-controlled scores reveal a meaningful gap, with CFPO outperforming RLOO by 3--4 percentage points. This confirms that RLOO inflates its win rate through verbosity, while CFPO's consistent scores between raw and length-controlled metrics indicate genuine quality improvements.

Furthermore, RLOO substantially degrades performance on IFEval, dropping from 59.6 to 47.0 compared to the SFT baseline—a 12-point decline in exact instruction-following capability. CFPO preserves this capability much better, achieving 55.6 with only a 4-point drop. This suggests that RLOO's aggressive optimization not only exploits superficial reward correlates but limits the model's ability to follow precise instructions, while CFPO's conservative updates maintain this capability.

\paragraph{Capability Retention.}

Finally, we assess how these optimization differences affect retention of base model capabilities. RLHF methods often degrade base model capabilities, a phenomenon known as alignment tax~\citep{askell2021generallanguageassistantlaboratory}. RLOO incurs 12--16\% alignment tax depending on KL penalty settings, while CFPO variants pay only 4--5\%. The degradation under RLOO spans ARC, HellaSwag, and Winogrande, whereas CFPO retains substantially more capability across all evaluations.

These results mirror our reasoning findings: across both settings, CFPO's quadratic penalty produces more stable optimization dynamics and competitive downstream performance. The conservative updates that prevent training collapse in reasoning also prevent the excessive policy drift that degrades capabilities in RLHF.

\section{Related Work}

\paragraph{Stable Policy Gradient Methods.}
Trust region methods have been fundamental to stable policy optimization since the natural policy gradient \citep{NIPS2001_4b86abe4} and TRPO \citep{schulman2017trustregionpolicyoptimization} established theoretical guarantees for monotonic improvement. PPO \citep{schulman2017proximalpolicyoptimizationalgorithms} made these methods practical through clipping, becoming dominant for both continuous control and language model training. However, \citet{wang2020trulyproximalpolicyoptimization} showed that clipping fails to bound KL divergence, while \citet{engstrom2020implementationmattersdeeppolicy} demonstrated that implementation details matter more than clipping itself. These findings have motivated principled alternatives: f-divergence generalizations \citep{belousov2018fdivergenceconstrainedpolicyimprovement}, mirror descent interpretations \citep{tomar2021mirrordescentpolicyoptimization,lan2022policymirrordescentreinforcement}, and algorithms like MPO \citep{abdolmaleki2018maximumposterioripolicyoptimisation} and AWR \citep{peng2019advantageweightedregressionsimplescalable} that enforce trust regions through different mechanisms. SPO \citep{xie2025simplepolicyoptimization} replaces clipping with a convex quadratic penalty derived from TV divergence, providing non-zero gradients while implicitly enforcing trust region bounds. We extend SPO to language model training.

\paragraph{Reinforcement Learning for Language Models.}
In RLHF, reward models trained on human preferences guide policy optimization \citep{christiano2023deepreinforcementlearninghuman,ziegler2020finetuninglanguagemodelshuman,stiennon2022learningsummarizehumanfeedback,bai2022constitutionalaiharmlessnessai,ouyang2022traininglanguagemodelsfollow}, though challenges like reward overoptimization \citep{gao2022scalinglawsrewardmodel} have motivated simpler approaches. DPO \citep{rafailov2024directpreferenceoptimizationlanguage} bypasses reward models entirely, spawning variants including IPO \citep{azar2023generaltheoreticalparadigmunderstand}, KTO \citep{ethayarajh2024ktomodelalignmentprospect}, and SimPO \citep{meng2024simposimplepreferenceoptimization}. Among online methods, RLOO \citep{ahmadian2024basicsrevisitingreinforcestyle} and ReMax \citep{li2024remaxsimpleeffectiveefficient} eliminate the critic while retaining on-policy benefits. 

For reasoning, RL with verifiable rewards has proven effective \citep{openai2024openaio1card,deepseekai2025deepseekr1incentivizingreasoningcapability}, building on work showing process supervision improves reasoning \citep{lightman2023letsverifystepstep,zelikman2022starbootstrappingreasoningreasoning}. GRPO \citep{shao2024deepseekmath} is now standard, using group-relative advantages without a critic. However, scaling has revealed clipping-related instabilities, motivating variants like DAPO \citep{yu2025dapoopensourcellmreinforcement}, Dr.GRPO \citep{liu2025understandingr1zeroliketrainingcritical}, and $\lambda$-GRPO \citep{wang2025lambdagrpounifyinggrpoframeworks} that modify clipping behavior or token weighting. Rather than patching clipping, CFPO replaces it entirely with the quadratic penalty, providing a unified approach across both RLHF and reasoning settings.

\section{Discussion and Future Work}
Our experiments span models from 1.5B to 8B parameters across reasoning and alignment tasks, but are limited to Qwen and LLaMA model families trained on MATH and OpenRLHF datasets. Frontier models operate at substantially larger scales with longer training horizons; we did not have sufficient compute to evaluate CFPO in these regimes, and whether its conservative updates remain beneficial or become overly restrictive at scale is an open question. Similarly, due to resource constraints, we could not explore greater diversity in model architectures, training datasets, and domains with sparser or noisier rewards such as code generation or agentic applications which may reveal different optimization dynamics. We leave these directions to future work.

\section{Conclusion}

We propose CFPO as a drop-in replacement for the clipped surrogate objectives used in PPO and GRPO in language model post-training. By replacing heuristic clipping with a convex quadratic penalty derived from Total Variation divergence constraints, CFPO provides smooth gradients throughout the optimization landscape while implicitly enforcing trust region bounds. Our experiments across both reasoning and alignment settings demonstrate that CFPO offers improved training stability—substantially delaying collapse at high iteration counts, reducing alignment tax, and maintaining gradual entropy consumption—while achieving competitive downstream performance, all at no additional computational cost and with only a one-line code change. These findings suggest that clipping's aggressive optimization dynamics, while effective at rapidly acquiring surface-level patterns, introduces instabilities problematic at scale, and that CFPO's more conservative updates offer a promising alternative for language model post-training.

\section*{Acknowledgements}

The authors gratefully acknowledge the scientific support and HPC resources provided by the Erlangen National High Performance Computing Center (NHR@FAU) of the Friedrich-Alexander-Universität Erlangen-Nürnberg (FAU). The hardware is funded by the German Research Foundation (DFG).



\bibliography{example_paper}
\bibliographystyle{icml2026}

\newpage
\appendix
\onecolumn
\section{Theoretical Results of Simple Policy Optimization}
\label{app:theory}
In this appendix, we restate the key theoretical results from \citet{xie2025simplepolicyoptimization} that motivate Simple Policy Optimization (SPO) as an alternative to PPO's clipping mechanism.

\subsection{Performance Improvement Bounds}

The foundation of trust region methods lies in the policy performance difference theorem \citep{Kakade2002ApproximatelyOA}, which expresses the performance gap between policies in terms of advantage functions. Building on this, \citet{achiam2017constrainedpolicyoptimization} established a performance improvement lower bound using Total Variation (TV) divergence:

\begin{theorem}[Performance Improvement Lower Bound]
\label{thm:tv_bound}
Given any two policies $\pi$ and $\tilde{\pi}$, the following bound holds:
\begin{equation}
    \eta(\tilde{\pi}) - \eta(\pi) \geq \frac{1}{1-\gamma}\mathbb{E}_{s \sim \rho_\pi, a \sim \tilde{\pi}}\left[A_\pi(s,a)\right] - \frac{2\xi\gamma}{(1-\gamma)^2}\mathbb{E}_{s \sim \rho_\pi}\left[D_{\mathrm{TV}}(\pi \| \tilde{\pi})[s]\right],
\end{equation}
where $\xi = \max_s \left|\mathbb{E}_{a \sim \tilde{\pi}(\cdot|s)}\left[A_\pi(s,a)\right]\right|$ and $D_{\mathrm{TV}}$ denotes the Total Variation divergence.
\end{theorem}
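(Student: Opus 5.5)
We follow the route of \citet{achiam2017constrainedpolicyoptimization}: an exact performance-difference identity plus a coupling bound on the state distributions. Write $d^{\tilde\pi}$ for the normalized discounted state distribution of $\tilde\pi$, so $\rho_\pi=d^{\pi}$. The performance difference lemma \citep{Kakade2002ApproximatelyOA} gives
\begin{equation}
\eta(\tilde\pi)-\eta(\pi)=\frac{1}{1-\gamma}\,\mathbb{E}_{s\sim d^{\tilde\pi},\,a\sim\tilde\pi}\left[A_\pi(s,a)\right].
\end{equation}
Set $\bar A(s)=\mathbb{E}_{a\sim\tilde\pi(\cdot|s)}[A_\pi(s,a)]$, so that $\xi=\max_s|\bar A(s)|$. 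The first step is the split
\begin{equation}
\mathbb{E}_{s\sim d^{\tilde\pi}}[\bar A(s)]=\mathbb{E}_{s\sim \rho_\pi}[\bar A(s)]+\sum_s\bigl(d^{\tilde\pi}(s)-\rho_\pi(s)\bigr)\bar A(s).
\end{equation}
The first term produces the surrogate. By H\"older's inequality, the second term is at least $-\xi\,\|d^{\tilde\pi}-\rho_\pi\|_1$.

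The remaining task, and the main obstacle, is to show
\begin{equation}
\|d^{\tilde\pi}-\rho_\pi\|_1\le\frac{2\gamma}{1-\gamma}\,\mathbb{E}_{s\sim\rho_\pi}\left[D_{\mathrm{TV}}(\pi\|\tilde\pi)[s]\right].
\end{equation}
Multiplying this by $\xi/(1-\gamma)$ gives exactly the stated penalty constant $2\xi\gamma/(1-\gamma)^2$.

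To prove it, we write the discounted distributions in matrix form. Let $P_\pi$ be the state-transition matrix induced by $\pi$ and $\mu$ the initial distribution. Then
\begin{equation}
d^\pi=(1-\gamma)(I-\gamma P_\pi)^{-1}\mu .
\end{equation}
Setting $G=(I-\gamma P_\pi)^{-1}$ and $\tilde G=(I-\gamma P_{\tilde\pi})^{-1}$, we have the resolvent identity $\tilde G-G=\gamma\tilde G(P_{\tilde\pi}-P_\pi)G$. Hence
\begin{equation}
d^{\tilde\pi}-d^\pi=\gamma\,\tilde G\,(P_{\tilde\pi}-P_\pi)\,d^\pi .
\end{equation}

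We then bound the two factors in the induced $\ell_1$ norm. Since $P_{\tilde\pi}$ is column-stochastic, the Neumann series gives $\|\tilde G\|_1\le 1/(1-\gamma)$. For the other factor,
\begin{equation}
\|(P_{\tilde\pi}-P_\pi)d^\pi\|_1\le\sum_{s}d^\pi(s)\sum_{s'}\Bigl|\sum_a P(s'|s,a)\bigl(\tilde\pi(a|s)-\pi(a|s)\bigr)\Bigr| .
\end{equation}
Moving the absolute value inside the sum over $a$ and summing $P(s'|s,a)$ over $s'$ bounds the inner quantity by $\sum_a|\tilde\pi(a|s)-\pi(a|s)|=2D_{\mathrm{TV}}(\pi\|\tilde\pi)[s]$. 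Combining the two bounds yields the displayed inequality. The care here is in the convention $D_{\mathrm{TV}}=\tfrac12\|\cdot\|_1$, which is exactly where the factor $2$ enters.

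Substituting back into the performance difference lemma gives
\begin{equation}
\eta(\tilde\pi)-\eta(\pi)\ge\frac{1}{1-\gamma}\,\mathbb{E}_{s\sim\rho_\pi,\,a\sim\tilde\pi}\left[A_\pi(s,a)\right]-\frac{2\xi\gamma}{(1-\gamma)^2}\,\mathbb{E}_{s\sim\rho_\pi}\left[D_{\mathrm{TV}}(\pi\|\tilde\pi)[s]\right],
\end{equation}
which is the claim.
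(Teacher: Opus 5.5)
Your proof is correct and matches the argument the paper relies on: the paper gives no proof of its own and attributes the bound to Achiam et al.\ (2017). Their derivation is the same as yours, namely the performance difference identity, H\"older's inequality with $\xi=\max_s|\bar A(s)|$, and the resolvent bound $\|d^{\tilde\pi}-d^{\pi}\|_1\le\frac{2\gamma}{1-\gamma}\mathbb{E}_{s\sim d^\pi}[D_{\mathrm{TV}}(\pi\|\tilde\pi)[s]]$ (their Lemma 3), with only cosmetic differences: you specialize directly to $f=V^\pi$ through the performance difference lemma instead of their general $\delta_f$ identity, and, like them, you implicitly assume finite state and action spaces so the matrix form and $\max_s$ are well defined.
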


Using the relationship between TV divergence and probability ratios, this bound can be rewritten as:
\begin{equation}
\label{eq:ratio_bound}
    \eta(\tilde{\pi}) - \eta(\pi) \geq \frac{1}{1-\gamma}\mathbb{E}_{s,a \sim \pi}\left[\frac{\tilde{\pi}(a|s)}{\pi(a|s)} A_\pi(s,a)\right] - \frac{\xi\gamma}{(1-\gamma)^2}\mathbb{E}_{s,a \sim \pi}\left[\left|\frac{\tilde{\pi}(a|s)}{\pi(a|s)} - 1\right|\right].
\end{equation}

This formulation reveals why constraining the probability ratio $|r_t - 1| \leq \epsilon$ is beneficial: it directly controls the penalty term in the performance bound.

\subsection{Advantages of TV Divergence over KL Divergence}

A key insight from \citet{xie2025simplepolicyoptimization} is that TV divergence constraints offer theoretical advantages over the KL divergence constraints used in TRPO.

\begin{proposition}[TV Solution Space Contains KL Solution Space]
\label{prop:tv_contains_kl}
Given an old policy $\pi$, define the solution spaces under TV and KL divergence constraints as:
\begin{align}
    \Omega_{\mathrm{TV}} &= \{\tilde{\pi} \mid D_{\mathrm{TV}}(\pi \| \tilde{\pi})[s] \leq \delta_{\mathrm{TV}}, \forall s \in \mathcal{S}\}, \\
    \Omega_{\mathrm{KL}} &= \{\tilde{\pi} \mid D_{\mathrm{KL}}(\pi \| \tilde{\pi})[s] \leq \delta_{\mathrm{KL}}, \forall s \in \mathcal{S}\},
\end{align}
where $\delta_{\mathrm{KL}} > 0$ is a predefined threshold. For $\delta_{\mathrm{TV}} \geq \sqrt{\frac{1}{2}\delta_{\mathrm{KL}}}$, we have $\Omega_{\mathrm{KL}} \subset \Omega_{\mathrm{TV}}$.
\end{proposition}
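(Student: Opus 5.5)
The plan is to use Pinsker's inequality pointwise in the state, which turns the KL constraint defining $\Omega_{\mathrm{KL}}$ directly into a TV constraint. Take any $\tilde{\pi} \in \Omega_{\mathrm{KL}}$ and fix an arbitrary state $s \in \mathcal{S}$. I would write $D_{\mathrm{TV}}(\pi\|\tilde{\pi})[s] = \tfrac12 \sum_a |\pi(a|s) - \tilde{\pi}(a|s)|$, or the corresponding integral for continuous actions. This is the normalization that makes Pinsker read
\begin{equation*}
D_{\mathrm{TV}}(\pi\|\tilde{\pi})[s] \le \sqrt{\tfrac12 D_{\mathrm{KL}}(\pi\|\tilde{\pi})[s]}.
\end{equation*}
Pinsker is applied to the two distributions $\pi(\cdot|s)$ and $\tilde{\pi}(\cdot|s)$. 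The direction of the KL divergence does not matter, since TV is symmetric and Pinsker holds for either ordering.

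The main chain is then as follows. Membership in $\Omega_{\mathrm{KL}}$ gives $D_{\mathrm{KL}}(\pi\|\tilde{\pi})[s] \le \delta_{\mathrm{KL}}$. Because $x \mapsto \sqrt{x/2}$ is monotone, this yields $D_{\mathrm{TV}}(\pi\|\tilde{\pi})[s] \le \sqrt{\delta_{\mathrm{KL}}/2} \le \delta_{\mathrm{TV}}$, where the last step is the hypothesis. Since $s$ was arbitrary, the TV constraint holds for every state, so $\tilde{\pi} \in \Omega_{\mathrm{TV}}$. Hence $\Omega_{\mathrm{KL}} \subseteq \Omega_{\mathrm{TV}}$.

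The only delicate point is the normalization convention. The appendix's rewriting in \eqref{eq:ratio_bound} passes from $2\,\mathbb{E}[D_{\mathrm{TV}}]$ to $\mathbb{E}|r-1|$, which is consistent with the $\tfrac12 L_1$ convention under which the constant $\sqrt{1/2}$ is exactly Pinsker's. I would state this convention explicitly. Under the unnormalized $L_1$ convention the threshold would instead be $\sqrt{2\delta_{\mathrm{KL}}}$.

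The statement writes $\subset$. If strict inclusion is intended, I would add an example with $\delta_{\mathrm{TV}} = \sqrt{\delta_{\mathrm{KL}}/2}$. Use two actions, put $\pi(\cdot|s)$ with full support, and choose $\tilde{\pi}(\cdot|s)$ to assign zero probability to an action that $\pi$ supports while staying within TV distance $\delta_{\mathrm{TV}}$. Then $D_{\mathrm{KL}}(\pi\|\tilde{\pi})[s] = \infty$, so $\tilde{\pi} \notin \Omega_{\mathrm{KL}}$, but $\tilde{\pi} \in \Omega_{\mathrm{TV}}$. The step I expect to require the most care is this strictness example, specifically ensuring feasibility for small $\delta$: take $\pi(a_1|s) = \delta_{\mathrm{TV}}$ and $\tilde{\pi}(a_1|s) = 0$, which requires $\delta_{\mathrm{TV}} < 1$. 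The inclusion itself is immediate from Pinsker.
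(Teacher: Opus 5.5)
Your proof is the paper's proof: apply Pinsker's inequality state by state to get $D_{\mathrm{TV}}(\pi\|\tilde{\pi})[s] \le \sqrt{\tfrac12 D_{\mathrm{KL}}(\pi\|\tilde{\pi})[s]} \le \sqrt{\tfrac12\delta_{\mathrm{KL}}} \le \delta_{\mathrm{TV}}$, which yields the inclusion in the sense of $\subseteq$, exactly what the paper establishes. Your optional strictness example is not needed, and as written it chooses the old policy (setting $\pi(a_1|s)=\delta_{\mathrm{TV}}$) even though the statement treats $\pi$ as given, so it only shows strict inclusion for particular choices of $\pi$.
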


\begin{proof}
For any $\tilde{\pi} \in \Omega_{\mathrm{KL}}$, by Pinsker's inequality:
\begin{equation}
    D_{\mathrm{TV}}(\pi \| \tilde{\pi})[s] \leq \sqrt{\frac{1}{2}D_{\mathrm{KL}}(\pi \| \tilde{\pi})[s]} \leq \sqrt{\frac{1}{2}\delta_{\mathrm{KL}}} \leq \delta_{\mathrm{TV}}.
\end{equation}
Thus $\tilde{\pi} \in \Omega_{\mathrm{KL}} \Rightarrow \tilde{\pi} \in \Omega_{\mathrm{TV}}$, establishing $\Omega_{\mathrm{KL}} \subset \Omega_{\mathrm{TV}}$.
\end{proof}

Furthermore, optimizing within the larger TV-constrained space yields superior bounds:

\begin{theorem}[Superiority of TV-Constrained Optimization]
\label{thm:tv_superior}
Let $\mathcal{L}_\pi^{\mathrm{TV}}(\tilde{\pi})$ and $\mathcal{L}_\pi^{\mathrm{KL}}(\tilde{\pi})$ denote the performance improvement lower bounds with TV and KL divergence penalties respectively:
\begin{align}
    \mathcal{L}_\pi^{\mathrm{TV}}(\tilde{\pi}) &= \frac{1}{1-\gamma}\mathbb{E}_{s,a \sim \pi}\left[\frac{\tilde{\pi}(a|s)}{\pi(a|s)} A_\pi(s,a)\right] - \frac{2\xi\gamma}{(1-\gamma)^2}\mathbb{E}_{s \sim \rho_\pi}\left[D_{\mathrm{TV}}(\pi \| \tilde{\pi})[s]\right], \\
    \mathcal{L}_\pi^{\mathrm{KL}}(\tilde{\pi}) &= \frac{1}{1-\gamma}\mathbb{E}_{s,a \sim \pi}\left[\frac{\tilde{\pi}(a|s)}{\pi(a|s)} A_\pi(s,a)\right] - \frac{2\xi\gamma}{(1-\gamma)^2}\mathbb{E}_{s \sim \rho_\pi}\left[\sqrt{\frac{1}{2}D_{\mathrm{KL}}(\pi \| \tilde{\pi})[s]}\right].
\end{align}
Define the optimal policies in each constrained space:
\begin{equation}
    \tilde{\pi}_{\mathrm{TV}}^* = \arg\max_{\tilde{\pi} \in \Omega_{\mathrm{TV}}} \mathcal{L}_\pi^{\mathrm{TV}}(\tilde{\pi}), \quad \tilde{\pi}_{\mathrm{KL}}^* = \arg\max_{\tilde{\pi} \in \Omega_{\mathrm{KL}}} \mathcal{L}_\pi^{\mathrm{KL}}(\tilde{\pi}).
\end{equation}
For $\delta_{\mathrm{TV}} \geq \sqrt{\frac{1}{2}\delta_{\mathrm{KL}}}$, we have $\mathcal{L}_\pi^{\mathrm{TV}}(\tilde{\pi}_{\mathrm{TV}}^*) \geq \mathcal{L}_\pi^{\mathrm{KL}}(\tilde{\pi}_{\mathrm{KL}}^*)$.
\end{theorem}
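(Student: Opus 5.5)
We prove the claim by chaining two inequalities through the KL-optimal policy $\tilde{\pi}_{\mathrm{KL}}^*$: first compare the two objectives pointwise, then compare the two feasible sets.

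\emph{Pointwise comparison of the bounds.} The two bounds $\mathcal{L}_\pi^{\mathrm{TV}}$ and $\mathcal{L}_\pi^{\mathrm{KL}}$ share the same surrogate term $\frac{1}{1-\gamma}\mathbb{E}_{s,a\sim\pi}[\frac{\tilde{\pi}}{\pi}A_\pi]$. They also share the same nonnegative coefficient $\frac{2\xi\gamma}{(1-\gamma)^2}$ in front of the penalty. One point needs care: $\xi$ depends on $\tilde{\pi}$, but it is the same expression in both bounds, so it is identical for any fixed $\tilde{\pi}$. The bounds therefore differ only in the per-state penalty, which is $D_{\mathrm{TV}}(\pi\|\tilde{\pi})[s]$ in one and $\sqrt{\tfrac12 D_{\mathrm{KL}}(\pi\|\tilde{\pi})[s]}$ in the other. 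Pinsker's inequality, already used in the proof of Proposition~\ref{prop:tv_contains_kl}, gives
\begin{equation*}
D_{\mathrm{TV}}(\pi\|\tilde{\pi})[s] \le \sqrt{\tfrac12 D_{\mathrm{KL}}(\pi\|\tilde{\pi})[s]}
\end{equation*}
at every state. Taking the expectation over $s\sim\rho_\pi$ and multiplying by the nonnegative coefficient, we get $\mathcal{L}_\pi^{\mathrm{TV}}(\tilde{\pi}) \ge \mathcal{L}_\pi^{\mathrm{KL}}(\tilde{\pi})$ for every policy $\tilde{\pi}$.

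\emph{Comparison of the feasible sets.} By Proposition~\ref{prop:tv_contains_kl}, the hypothesis $\delta_{\mathrm{TV}}\ge\sqrt{\delta_{\mathrm{KL}}/2}$ gives $\Omega_{\mathrm{KL}}\subset\Omega_{\mathrm{TV}}$. Hence $\tilde{\pi}_{\mathrm{KL}}^*\in\Omega_{\mathrm{TV}}$, so it is a feasible candidate for the TV-constrained problem. Optimality of $\tilde{\pi}_{\mathrm{TV}}^*$ over $\Omega_{\mathrm{TV}}$ then gives $\mathcal{L}_\pi^{\mathrm{TV}}(\tilde{\pi}_{\mathrm{TV}}^*)\ge\mathcal{L}_\pi^{\mathrm{TV}}(\tilde{\pi}_{\mathrm{KL}}^*)$.

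\emph{Conclusion.} Applying the pointwise comparison at $\tilde{\pi}=\tilde{\pi}_{\mathrm{KL}}^*$ and combining with the previous inequality yields
\begin{equation*}
\mathcal{L}_\pi^{\mathrm{TV}}(\tilde{\pi}_{\mathrm{TV}}^*) \ge \mathcal{L}_\pi^{\mathrm{TV}}(\tilde{\pi}_{\mathrm{KL}}^*) \ge \mathcal{L}_\pi^{\mathrm{KL}}(\tilde{\pi}_{\mathrm{KL}}^*),
\end{equation*}
which is the claim.

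\emph{Main obstacle.} The argument is short; the step that needs the most care is existence of the maximizers. If the $\arg\max$ over $\Omega_{\mathrm{TV}}$ might not be attained, we can instead interpret $\mathcal{L}_\pi^{\mathrm{TV}}(\tilde{\pi}_{\mathrm{TV}}^*)$ as the supremum. In the finite-action case, compactness of the constraint sets settles attainment directly. The same two-step chain gives the inequality in either case.
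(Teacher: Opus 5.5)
Your proposal is correct and matches the paper's proof: both use $\Omega_{\mathrm{KL}}\subset\Omega_{\mathrm{TV}}$ (Proposition~\ref{prop:tv_contains_kl}) to get $\mathcal{L}_\pi^{\mathrm{TV}}(\tilde{\pi}_{\mathrm{TV}}^*)\ge\mathcal{L}_\pi^{\mathrm{TV}}(\tilde{\pi}_{\mathrm{KL}}^*)$, and then Pinsker's inequality to pass to $\mathcal{L}_\pi^{\mathrm{KL}}(\tilde{\pi}_{\mathrm{KL}}^*)$. Your two extra remarks are sound details the paper leaves implicit: that $\xi$ is the same in both bounds for fixed $\tilde{\pi}$, and that the chain still holds with suprema in place of attained maxima.
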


\begin{proof}
Since $\Omega_{\mathrm{KL}} \subset \Omega_{\mathrm{TV}}$ by Proposition~\ref{prop:tv_contains_kl}:
\begin{align}
    \mathcal{L}_\pi^{\mathrm{TV}}(\tilde{\pi}_{\mathrm{TV}}^*) &\geq \mathcal{L}_\pi^{\mathrm{TV}}(\tilde{\pi}_{\mathrm{KL}}^*) \\
    &= \frac{1}{1-\gamma}\mathbb{E}_{s,a \sim \pi}\left[\frac{\tilde{\pi}_{\mathrm{KL}}^*(a|s)}{\pi(a|s)} A_\pi(s,a)\right] - \frac{2\xi\gamma}{(1-\gamma)^2}\mathbb{E}_{s \sim \rho_\pi}\left[D_{\mathrm{TV}}(\pi \| \tilde{\pi}_{\mathrm{KL}}^*)[s]\right] \\
    &\geq \frac{1}{1-\gamma}\mathbb{E}_{s,a \sim \pi}\left[\frac{\tilde{\pi}_{\mathrm{KL}}^*(a|s)}{\pi(a|s)} A_\pi(s,a)\right] - \frac{2\xi\gamma}{(1-\gamma)^2}\mathbb{E}_{s \sim \rho_\pi}\left[\sqrt{\frac{1}{2}D_{\mathrm{KL}}(\pi \| \tilde{\pi}_{\mathrm{KL}}^*)[s]}\right] \\
    &= \mathcal{L}_\pi^{\mathrm{KL}}(\tilde{\pi}_{\mathrm{KL}}^*),
\end{align}
where the second inequality follows from Pinsker's inequality.
\end{proof}

\subsection{The $\epsilon$-Aligned Objective Class}

To understand why PPO's clipping fails to constrain probability ratios while SPO succeeds, \citet{xie2025simplepolicyoptimization} introduce the concept of $\epsilon$-aligned objectives.

Based on the performance bound in Equation~\eqref{eq:ratio_bound}, the goal is to solve the following constrained optimization problem:
\begin{equation}
\label{eq:constrained_opt}
\begin{split}
    \max_{\theta} \enspace & \mathbb{E}_{(s_t, a_t) \sim \pi_{\theta_{\mathrm{old}}}} \left[ r_t(\theta) \cdot \hat{A}_t \right], \\
    \text{s.t.} \enspace & \mathbb{E}_{(s_t, a_t) \sim \pi_{\theta_{\mathrm{old}}}} \left[ |r_t(\theta) - 1| \right] \leq \epsilon,
\end{split}
\end{equation}
where $r_t(\theta) = \pi_\theta(a_t|s_t) / \pi_{\theta_{\mathrm{old}}}(a_t|s_t)$ and $\hat{A}_t = \hat{A}(s_t, a_t)$.

For a single data point with advantage $A \neq 0$, this simplifies to:
\begin{equation}
    \max_r \; rA, \quad \text{s.t.} \; |r - 1| \leq \epsilon.
\end{equation}
Since the objective is linear in $r$, the optimal solution lies at the constraint boundary: $r^* = 1 + \mathrm{sign}(A) \cdot \epsilon$.

\begin{definition}[$\epsilon$-Aligned Objective]
\label{def:epsilon_aligned}
For any given $A \neq 0$ and $\epsilon > 0$, a function $f(r, A, \epsilon)$ is \emph{$\epsilon$-aligned} if it is differentiable and convex with respect to $r$, and attains its maximum at $r = 1 + \mathrm{sign}(A) \cdot \epsilon$.
\end{definition}

An $\epsilon$-aligned objective converts the constrained problem into an unconstrained one whose optimal solution automatically satisfies the constraint. The PPO and SPO objectives can be expressed as:
\begin{align}
    f_{\mathrm{PPO}} &= \min\left[rA, \mathrm{clip}(r, 1-\epsilon, 1+\epsilon) \cdot A\right], \\
    f_{\mathrm{SPO}} &= rA - \frac{|A|}{2\epsilon}(r-1)^2.
\end{align}

\begin{theorem}[SPO is $\epsilon$-Aligned]
\label{thm:spo_aligned}
The SPO objective $f_{\mathrm{SPO}}$ is $\epsilon$-aligned, while the PPO objective $f_{\mathrm{PPO}}$ is not.
\end{theorem}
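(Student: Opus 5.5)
The plan is to verify the three defining properties of Definition~\ref{def:epsilon_aligned} directly for $f_{\mathrm{SPO}}$, and then to exhibit a property that $f_{\mathrm{PPO}}$ violates. The statement is only coherent if ``convex'' is read as concavity of the maximized objective, i.e.\ convexity of $-f$ in the minimization view. I will note this explicitly and prove strict concavity, since a function with a finite interior maximum cannot be convex unless it is constant.

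For SPO, fix $A\neq 0$ and $\epsilon>0$. First, $f_{\mathrm{SPO}}(r)=rA-\frac{|A|}{2\epsilon}(r-1)^2$ is a polynomial in $r$, so it is differentiable everywhere. Second, differentiating gives $\partial_r f_{\mathrm{SPO}} = A - \frac{|A|}{\epsilon}(r-1)$ and $\partial_r^2 f_{\mathrm{SPO}} = -|A|/\epsilon<0$, so the objective is strictly concave and any stationary point is its unique global maximizer. Third, setting the derivative to zero gives $r-1=\epsilon A/|A| = \epsilon\,\mathrm{sign}(A)$, i.e.\ $r^*=1+\mathrm{sign}(A)\epsilon$. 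This matches the solution of the single-sample constrained problem preceding the definition.

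For PPO, I will split into the cases $A>0$ and $A<0$. If $A>0$, then $f_{\mathrm{PPO}}=A\min(r,\mathrm{clip}(r,1-\epsilon,1+\epsilon))$. Checking the three regions $r<1-\epsilon$, $r\in[1-\epsilon,1+\epsilon]$ and $r>1+\epsilon$ gives $f_{\mathrm{PPO}}=A\min(r,1+\epsilon)$. The case $A<0$ symmetrically gives $f_{\mathrm{PPO}}=A\max(r,1-\epsilon)$. In either case there is a kink at $r=1+\mathrm{sign}(A)\epsilon$: the left and right derivatives there are $A$ and $0$ (respectively $0$ and $A$). Hence $f_{\mathrm{PPO}}$ is not differentiable at exactly the point where alignment requires the maximum, so the differentiability requirement fails.

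As a second, more substantive failure, the maximum value $A(1+\epsilon)$ (or $A(1-\epsilon)$) is attained on the entire half-line $r\ge 1+\epsilon$ (or $r\le 1-\epsilon$). The maximizer is therefore not the single point $1+\mathrm{sign}(A)\epsilon$, and the gradient vanishes there, so nothing pulls $r$ back. This is the mechanism highlighted in Figure~\ref{fig:method figure}. The main obstacle is interpretive rather than technical: the ``convex'' wording of the definition and the meaning of ``attains its maximum at'' (unique maximizer). I would state both readings explicitly and show that PPO fails under either one, via nondifferentiability and via non-uniqueness of the maximizer.
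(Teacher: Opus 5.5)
Your proposal is correct and follows the paper's route: for SPO you check differentiability of the quadratic and solve for its stationary point $r^*=1+\mathrm{sign}(A)\epsilon$, and for PPO you show that clipping destroys differentiability. Your version is tighter than the paper's in two places. You correctly note that $f_{\mathrm{SPO}}$ is concave, so the paper's ``convex'' has to be read as a statement about $-f$. You also pin the kink exactly at $r=1+\mathrm{sign}(A)\epsilon$ via one-sided derivatives, whereas the paper only infers non-differentiability from the gradient vanishing beyond the clipping boundary.
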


\begin{proof}
For $f_{\mathrm{SPO}}$: The objective is a quadratic polynomial in $r$, hence differentiable and convex. Setting the derivative to zero:
\begin{equation}
    \frac{\partial f_{\mathrm{SPO}}}{\partial r} = A - \frac{|A|}{\epsilon}(r-1) = 0 \implies r = 1 + \mathrm{sign}(A) \cdot \epsilon.
\end{equation}
Thus $f_{\mathrm{SPO}}$ is $\epsilon$-aligned.

For $f_{\mathrm{PPO}}$: The clipping operation causes the gradient to vanish when $r > 1 + \epsilon$ and $A > 0$, or when $r < 1 - \epsilon$ and $A < 0$. This means $f_{\mathrm{PPO}}$ is not differentiable everywhere and does not satisfy the $\epsilon$-aligned definition.
\end{proof}

The practical consequence is that PPO's clipped objective zeros gradients for data points outside the trust region, providing no corrective signal to bring them back. In contrast, SPO maintains non-zero gradients that consistently guide optimization toward the constraint boundary $r = 1 + \mathrm{sign}(A) \cdot \epsilon$, ensuring effective probability ratio control throughout training.
\newpage
\section{Training Details}
\label{app:training}

\subsection{Reasoning Hyperparameters}
Training hyperparameters for reasoning experiments are listed in Table~\ref{tab:reasoning-hparams}. The same hyperparameters are used for both base and instruct model variants.

\begin{table}[ht]
\centering
\caption{Training hyperparameters for Qwen2.5 reasoning experiments. Only hyperparameters that affect the learned policy or evaluation are listed. Unspecified fields inherit the TRL\_v0.8 defaults.}
\setlength{\tabcolsep}{6pt}
\begin{tabular}{lcc}
\toprule
Parameter & 1.5B/3B & 7B\\ 
\midrule
Learning Rate           & $3\times10^{-6}$ & $1\times10^{-6}$\\
Batch Size              & 128 & 128\\
Group Size              & 3 & 3\\
KL Penalty ($\beta$)    & 0 & 0\\
Max Prompt Length       & 512 & 512\\
Max Completion Length   & 3072 & 3072\\
Temperature             & 0.9 & 0.9\\
Clip Ratio              & 0.2 & 0.2\\
Lr Scheduler Type       & Cosine & Cosine\\
Warmup Ratio            & 0.1 & 0.1\\
Optimizer               & \multicolumn{2}{c}{AdamW $(\beta_1{=}0.9,\ \beta_2{=}0.999,\ \varepsilon{=}10^{-8})$}\\
\bottomrule
\end{tabular}
\label{tab:reasoning-hparams}
\end{table}

The system prompts used during training are shown below. The same prompts are used for both base and instruct model variants.

\tcbset{
  mybox/.style={
    colback=white,
    colframe=gray!50!black,
    fonttitle=\bfseries,
    boxrule=0.5pt,
    arc=2pt,
    breakable,
    listing only,
    listing options={basicstyle=\ttfamily\small,
    breaklines=true,
    inputencoding=utf8}
  }
}

\begin{tcolorbox}[title={System prompt used for Qwen2.5-1.5B models.}, mybox, breakable]
You are a helpful AI Assistant, designed to provided well-reasoned and
detailed responses. You FIRST think about the reasoning process step by
step and then provide the user with the answer. Please enclose your final
answer in the box: \textbackslash boxed\{Your Answer\}.
\end{tcolorbox}

\begin{tcolorbox}[title={System prompt used for Qwen2.5-3B models.}, mybox, breakable]
You are a helpful AI Assistant, designed to provided well-reasoned and
detailed responses. You FIRST think about the reasoning process step by
step and then provide the user with the answer. Please enclose your final
answer in the box: \textbackslash boxed\{Your Answer\}. Please stop generation immediately
after outputing the box.
\end{tcolorbox}

\begin{tcolorbox}[title={System prompt used for Qwen2.5-7B models.}, mybox, breakable]
You are a helpful AI Assistant, designed to provided well-reasoned and detailed responses. Please provide a step-by-step solution to the following problem.
\end{tcolorbox}

\subsection{RLHF Hyperparameters}
For RLHF experiments on Llama-3-8B, we use the OpenRLHF framework with default hyperparameters, only modifying the policy loss to use CFPO. Key hyperparameters are listed in Table~\ref{tab:rlhf-hparams}. We use Llama-3-8B-SFT-Mixture as the base model and Llama-3-8B-RM-700K as the reward model, both from OpenRLHF. Training is conducted on the OpenRLHF prompt-collection-v0.1 dataset.

\begin{table}[h]
\centering
\caption{RLHF training hyperparameters for Llama-3-8B experiments.}
\setlength{\tabcolsep}{6pt}
\begin{tabular}{lc}
\toprule
Parameter & Value\\ 
\midrule
Actor Learning Rate     & $5\times10^{-7}$\\
Train Batch Size        & 64\\
Rollout Batch Size      & 512\\
Samples per Prompt      & 2\\
Max Prompt Length       & 1024\\
Max Generation Length   & 1024\\
KL Penalty ($\beta$)    & $1\times10^{-4}$ / $0\times10^{-4}$ \\
Advantage Estimator     & RLOO\\
\bottomrule
\end{tabular}
\label{tab:rlhf-hparams}
\end{table}

\newpage

\section{Qwen2.5 Results in TRL}

\begin{table}[htbp]
\centering
\caption{RLVR performance of Qwen2.5-1.5B-Instruct on MATH500, GSM8K, GPQA-Diamond, and AIME24 under GRPO and CFPO across increasing iteration counts. Both methods refine existing instruction-following and reasoning behavior with limited gains under small training budgets. The primary difference emerges in optimization stability: GRPO degrades at higher iteration counts, while CFPO maintains more consistent performance across tasks.}
\label{tab:qwen1.5b}
\begin{tabular}{lcccc}
\toprule
\textbf{Model} & \textbf{Math 500} & \textbf{GSM8K} & \textbf{GPQA-Diamond} & \textbf{AIME 24}\\
\midrule
Qwen2.5-1.5B-Instruct & 0.528 & 0.648 & 0.272 & 0.033 \\
\midrule
\multicolumn{5}{l}{\textit{GRPO Training}} \\
\quad + GRPO-iteration 2  & 0.580 & 0.745 & 0.222 & 0.033 \\
\quad + GRPO-iteration 4  & 0.544 & 0.682 & 0.277 & 0.033 \\
\quad + GRPO-iteration 8  & 0.370 & 0.604 & 0.248 & 0.033 \\
\quad + GRPO-iteration 16 & 0.100 & 0.349 & 0.262 & 0.000 \\
\midrule
\multicolumn{5}{l}{\textit{CFPO Training}} \\
\quad + CFPO-iteration 2  & 0.544 & 0.726 & 0.242 & 0.033 \\
\quad + CFPO-iteration 4  & 0.534 & 0.692 & 0.333 & 0.033 \\
\quad + CFPO-iteration 8  & 0.524 & 0.714 & 0.282 & 0.000 \\
\quad + CFPO-iteration 16 & 0.500 & 0.689 & 0.248 & 0.000 \\
\bottomrule
\end{tabular}
\end{table}

\begin{table}[htbp]
\centering
\caption{RLVR results for Qwen2.5-3B-Instruct across multiple reasoning benchmarks and iteration counts. 
With instruction-following already present, both GRPO and CFPO yield similar downstream behavior, indicating limited scope for qualitative improvement under short-horizon RL. 
Nevertheless, CFPO remains stable across all iteration counts, whereas GRPO exhibits degradation as off-policy pressure increases.}
\label{tab:qwen3b}
\begin{tabular}{lcccc}
\toprule
\textbf{Model} & \textbf{Math 500} & \textbf{GSM8K} & \textbf{GPQA-Diamond} & \textbf{AIME 24} \\
\midrule
Qwen2.5-3B-Instruct & 0.645 & 0.764 & 0.328 & 0.03 \\
\midrule
\multicolumn{5}{l}{\textit{GRPO Training}} \\
\quad + GRPO-iteration 2  & 0.656 & 0.793 & 0.292 & 0.067 \\
\quad + GRPO-iteration 4  & 0.612 & 0.785 & 0.323 & 0.000 \\
\quad + GRPO-iteration 8  & 0.450 & 0.759 & 0.262 & 0.033 \\
\quad + GRPO-iteration 16 & 0.080 & 0.205 & 0.258 & 0.067 \\
\midrule
\multicolumn{5}{l}{\textit{CFPO Training}} \\
\quad + CFPO-iteration 2  & 0.658 & 0.762 & 0.333 & 0.033 \\
\quad + CFPO-iteration 4  & 0.674 & 0.796 & 0.288 & 0.067 \\
\quad + CFPO-iteration 8  & 0.660 & 0.769 & 0.343 & 0.100 \\
\quad + CFPO-iteration 16 & 0.558 & 0.752 & 0.328 & 0.100 \\
\bottomrule
\end{tabular}
\end{table}

\begin{table}[htbp]
\centering
\caption{RLVR performance of Qwen2.5-7B-Instruct under GRPO and CFPO across iteration counts. At this scale, both methods achieve comparable refinement of reasoning and instruction-following behavior. However, GRPO becomes unstable at high iteration counts, while CFPO preserves consistent performance, highlighting its robustness under extended training.}
\label{tab:qwen7b}
\begin{tabular}{lcccc}
\toprule
\textbf{Model} & \textbf{Math 500} & \textbf{GSM8K} & \textbf{GPQA-Diamond} & \textbf{AIME 24} \\
\midrule
Qwen2.5-7B-Instruct & 0.756 & 0.816 & 0.363 & 0.100 \\
\midrule
\multicolumn{5}{l}{\textit{GRPO Training}} \\
\quad + GRPO-iteration 2  & 0.750 & 0.893 & 0.369 & 0.067 \\
\quad + GRPO-iteration 4  & 0.774 & 0.906 & 0.354 & 0.133 \\
\quad + GRPO-iteration 8  & 0.764 & 0.879 & 0.394 & 0.133 \\
\quad + GRPO-iteration 16 & 0.080 & 0.316 & 0.277 & 0.000 \\
\midrule
\multicolumn{5}{l}{\textit{CFPO Training}} \\
\quad + CFPO-iteration 2  & 0.758 & 0.845 & 0.348 & 0.100 \\
\quad + CFPO-iteration 4  & 0.772 & 0.839 & 0.338 & 0.167 \\
\quad + CFPO-iteration 8  & 0.768 & 0.835 & 0.318 & 0.100 \\
\quad + CFPO-iteration 16 & 0.758 & 0.869 & 0.318 & 0.130 \\
\bottomrule
\end{tabular}
\end{table}

\begin{table}[htbp]
\centering
\caption{Cold-start RL performance of Qwen2.5-1.5B on reasoning benchmarks under GRPO and CFPO across iteration counts. 
Both methods enable the emergence of reasoning behavior from a base model without prior instruction tuning. 
GRPO exhibits rapid early improvement followed by instability as iterations increase, whereas CFPO progresses more conservatively and maintains stable performance over longer training horizons.}
\label{tab:qwen1.5b-base}
\begin{tabular}{lcccc}
\toprule
\textbf{Model} & \textbf{Math 500} & \textbf{GSM8K} & \textbf{GPQA-Diamond} & \textbf{AIME 24} \\
\midrule
Qwen2.5-1.5B & 0.002 & 0.090 & 0.171 & 0 \\
\midrule
\multicolumn{5}{l}{\textit{GRPO Training}} \\
\quad GRPO-Iteration2  & 0.506 & 0.672 & 0.242 & 0.00 \\
\quad GRPO-Iteration4  & 0.558 & 0.729 & 0.293 & 0.00 \\
\quad GRPO-Iteration8  & 0.120 & 0.030 & 0.278 & 0.00 \\
\quad GRPO-Iteration16 & 0.074 & 0.039 & 0.197 & 0.00 \\
\midrule
\multicolumn{5}{l}{\textit{CFPO Training}} \\
\quad CFPO-Iteration2  & 0.514 & 0.548 & 0.288 & 0.067 \\
\quad CFPO-Iteration4  & 0.574 & 0.570 & 0.253 & 0.033 \\
\quad CFPO-Iteration8  & 0.530 & 0.280 & 0.253 & 0.033 \\
\quad CFPO-Iteration16 & 0.380 & 0.020 & 0.308 & 0.033 \\
\bottomrule
\end{tabular}
\end{table}

\begin{table}[htbp]
\centering
\caption{Cold-start RL results for Qwen2.5-3B across reasoning benchmarks and iteration counts. This model scale shows the clearest stability advantage for CFPO, which remains robust across all iteration settings. In contrast, GRPO degrades as iteration counts grow, consistent with the training dynamics observed in reward and clipping metrics.}
\label{tab:qwen3b-base}
\begin{tabular}{lcccc}
\toprule
\textbf{Model} & \textbf{Math 500} & \textbf{GSM8K} & \textbf{GPQA-Diamond} & \textbf{AIME 24} \\
\midrule
Qwen2.5-3B & 0.673 & 0.544 & 0.262 & 0.06 \\
\midrule
\multicolumn{5}{l}{\textit{GRPO Training}} \\
\quad GRPO-Iteration2  & 0.642 & 0.825 & 0.318 & 0.067 \\
\quad GRPO-Iteration4  & 0.648 & 0.796 & 0.303 & 0.033 \\
\quad GRPO-Iteration8  & 0.462 & 0.669 & 0.248 & 0.033 \\
\quad GRPO-Iteration16 & 0.220 & 0.673 & 0.252 & 0.033 \\
\midrule
\multicolumn{5}{l}{\textit{CFPO Training}} \\
\quad CFPO-Iteration2  & 0.634 & 0.766 & 0.313 & 0.100 \\
\quad CFPO-Iteration4  & 0.616 & 0.767 & 0.328 & 0.067 \\
\quad CFPO-Iteration8  & 0.594 & 0.779 & 0.268 & 0.067 \\
\quad CFPO-Iteration16 & 0.604 & 0.798 & 0.313 & 0.067 \\
\bottomrule
\end{tabular}
\end{table}

\begin{table}[htbp]
\centering
\caption{Cold-start RL performance of Qwen2.5-7B under GRPO and CFPO across increasing iteration counts. While both objectives improve reasoning behavior at moderate training lengths, GRPO collapses under aggressive iteration settings. CFPO sustains stable optimization over longer horizons, though extreme iteration counts eventually degrade performance even under quadratic regularization.}
\label{tab:qwen7b-base}
\begin{tabular}{lcccc}
\toprule
\textbf{Model} & \textbf{Math 500} & \textbf{GSM8K} & \textbf{GPQA-Diamond} & \textbf{AIME 24} \\
\midrule
Qwen2.5-7B & 0.553 & 0.636 & 0.308 & 0.100 \\
\midrule
\multicolumn{5}{l}{\textit{GRPO Training}} \\
\quad GRPO-Iteration2  & 0.730 & 0.699 & 0.303 & 0.100 \\
\quad GRPO-Iteration4  & 0.722 & 0.762 & 0.323 & 0.056 \\
\quad GRPO-Iteration8  & 0.710 & 0.645 & 0.323 & 0.067 \\
\quad GRPO-Iteration16 & 0.002 & 0.004 & 0.298 & 0.000 \\
\midrule
\multicolumn{5}{l}{\textit{CFPO Training}} \\
\quad CFPO-Iteration2  & 0.704 & 0.596 & 0.313 & 0.067 \\
\quad CFPO-Iteration4  & 0.711 & 0.661 & 0.257 & 0.100 \\
\quad CFPO-Iteration8  & 0.722 & 0.602 & 0.323 & 0.100 \\
\quad CFPO-Iteration16 & 0.206 & 0.059 & 0.237 & 0.000 \\
\bottomrule
\end{tabular}
\end{table}

\newpage

\section{\texttt{verl} Figures}
\begin{figure*}[h]
    \centering
    \includegraphics[width=\textwidth]{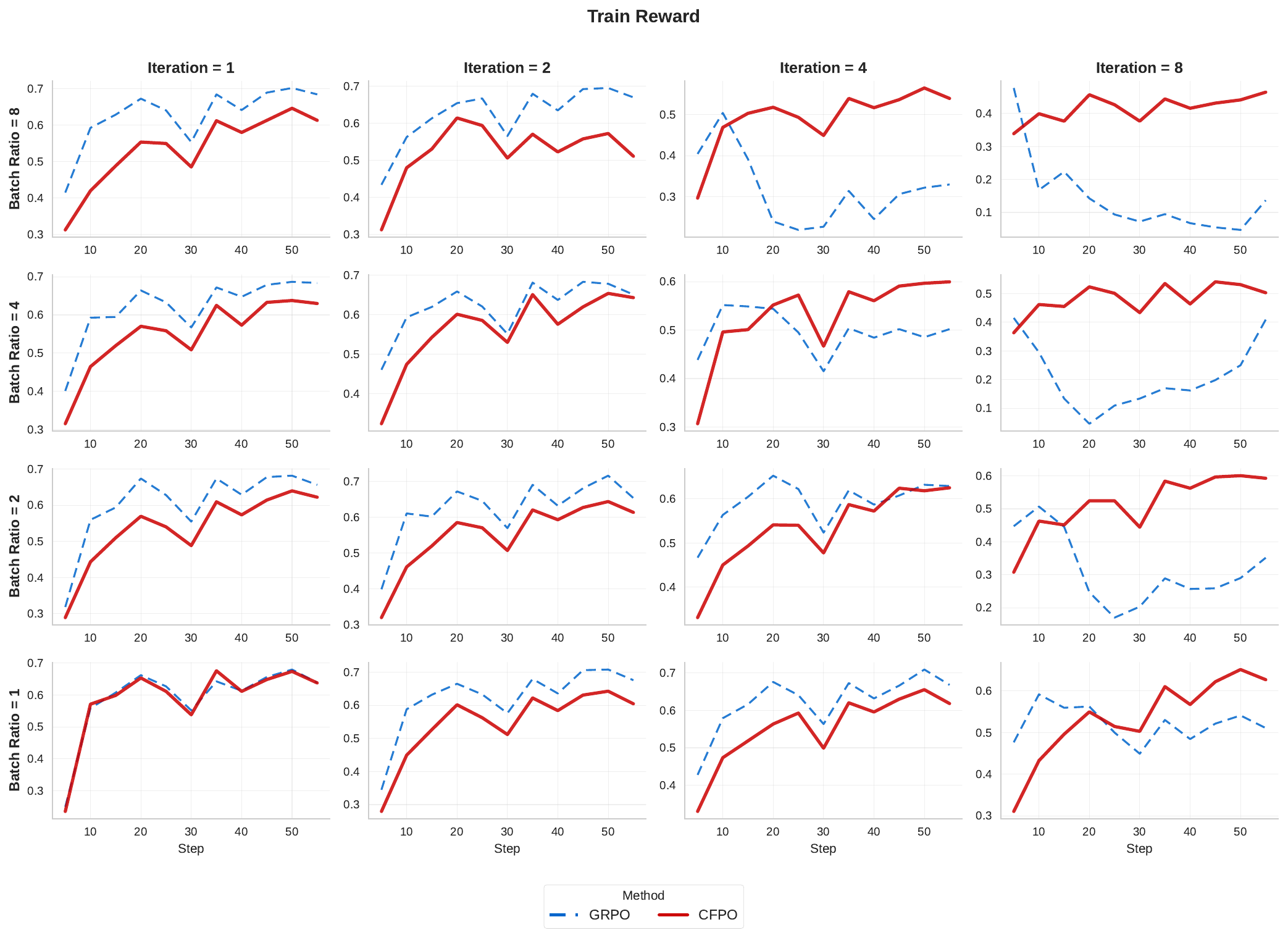}
    \caption{Training reward dynamics for cold-start RLVR training of Qwen2.5-3B using \texttt{verl} across batch ratios and iteration counts. GRPO exhibits faster early reward improvement, while CFPO progresses more gradually and converges later. Increasing iteration count leads to instability around 8 iterations, whereas increasing batch ratio alone remains comparatively stable, highlighting the stronger destabilizing effect of iteration-based sample reuse.}
    \label{fig:verl-trainreward}
\end{figure*}

\begin{figure*}[h]
    \centering
    \includegraphics[width=\textwidth]{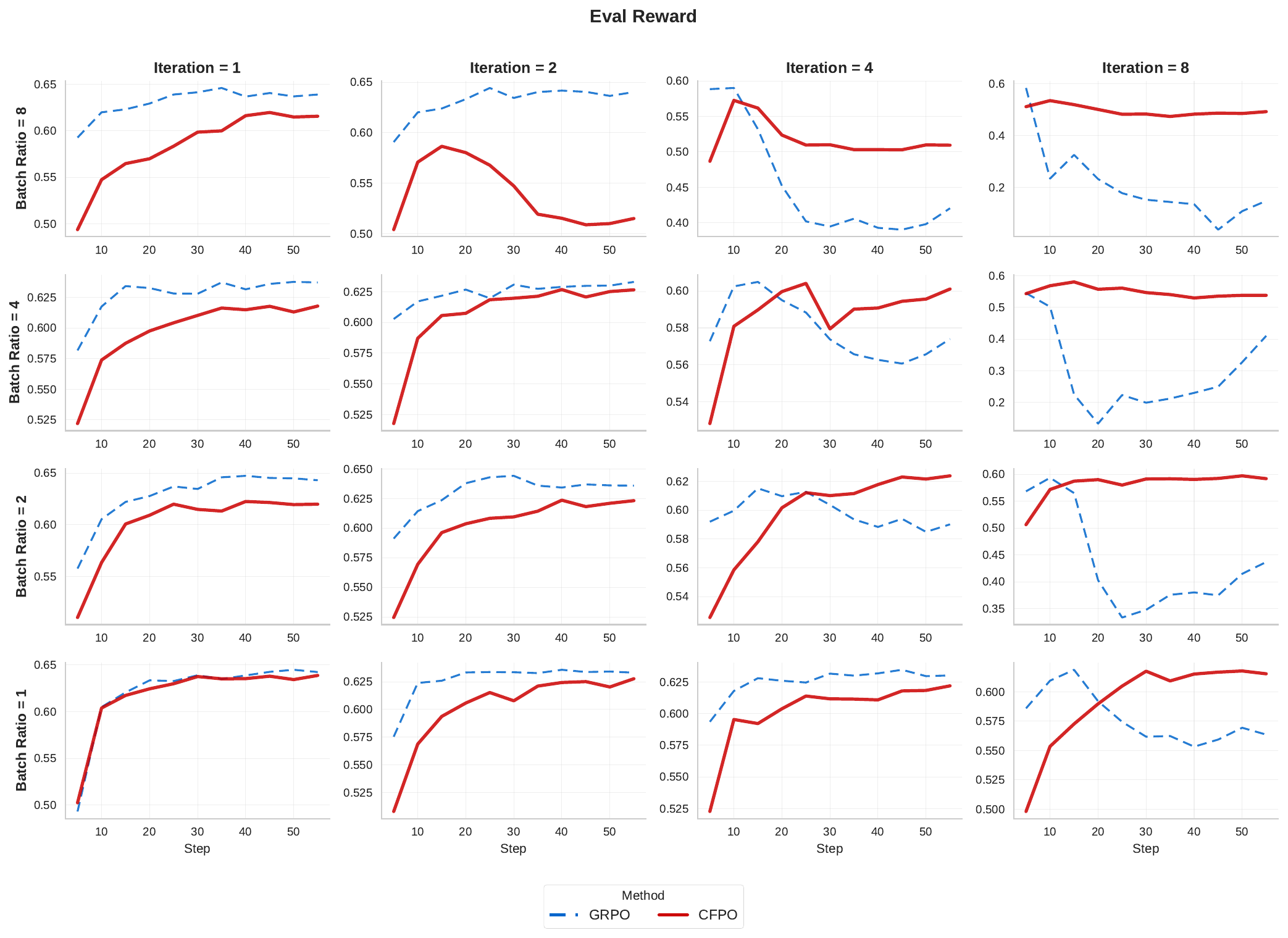}
    \caption{Validation reward during cold-start RLVR training of Qwen2.5-3B under GRPO and CFPO across batch ratios and iteration counts. Trends largely mirror training reward, with no systematic gains from increased off-policy updates. Instability emerges primarily with higher iteration counts rather than larger batch ratios, indicating limited generalization benefits from aggressive off-policy training.}
    \label{fig:verl-valreward}
\end{figure*}

\begin{figure*}[h]
    \centering
    \includegraphics[width=\textwidth]{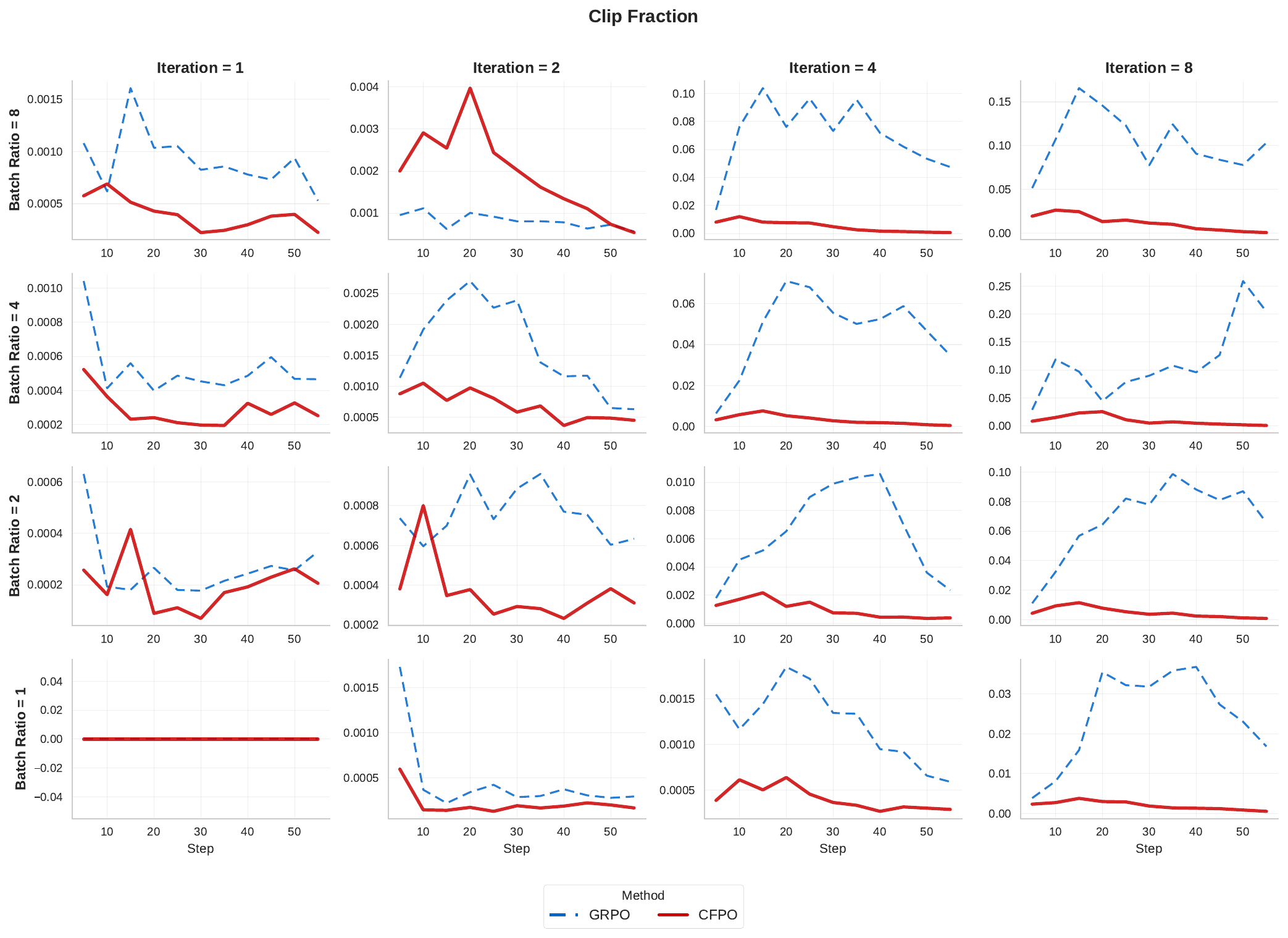}
    \caption{Policy gradient clipping ratios during cold-start RLVR training of Qwen2.5-3B using \texttt{verl}. For GRPO, clipping activity increases with both batch ratio and iteration count, and sharp rises in clipping precede training instability. CFPO consistently maintains lower and more stable clipping behavior across all settings, reflecting its smoother update geometry.}
    \label{fig:verl-pg_clipfrac}
\end{figure*}
\begin{figure*}[h]
    \centering
    \includegraphics[width=\textwidth]{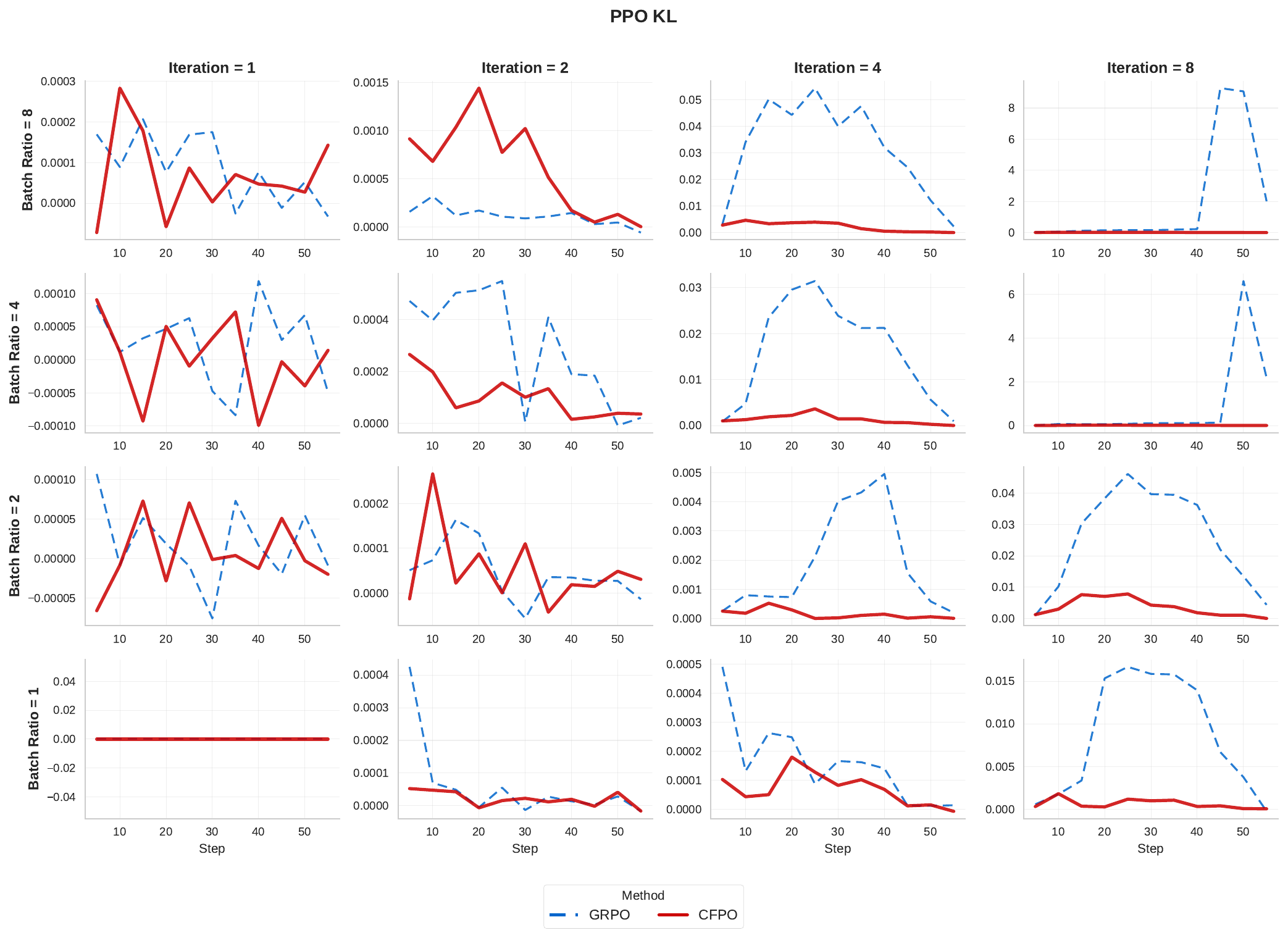}
    \caption{KL divergence between consecutive policies during cold-start RLVR training of Qwen2.5-3B across batch ratios and iteration counts. Despite differing optimization behavior, GRPO and CFPO exhibit similar KL magnitudes throughout training. This indicates that observed stability differences are not explained by large inter-policy shifts, but rather by differences in how updates are regularized within the trust region.}

    \label{fig:verl-ppo-kl}
\end{figure*}
\begin{figure*}[h]
    \centering
    \includegraphics[width=\textwidth]{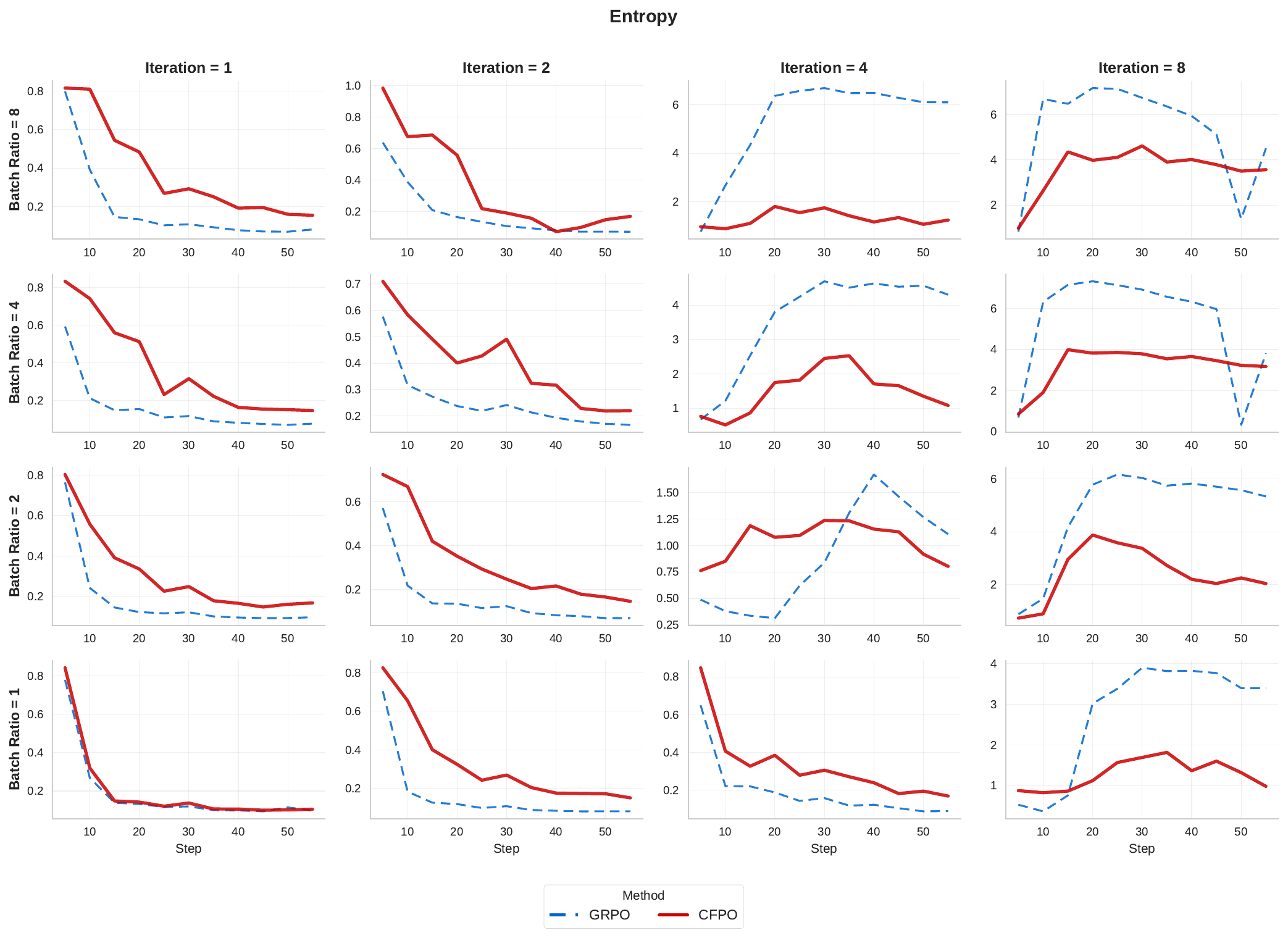}
    \caption{Policy entropy during cold-start RLVR training of Qwen2.5-3B under GRPO and CFPO across batch ratios and iteration counts. GRPO exhibits a faster reduction in entropy, particularly at higher iteration counts, consistent with its more aggressive optimization behavior. CFPO maintains higher entropy values over training, reflecting less aggressive policy updates and more gradual concentration of the policy distribution.}

    \label{fig:verl-entropy}
\end{figure*}


\end{document}